\pgfplotsset{compat=newest}
\let\@fnsymbol\@arabic
\newtheorem{theorem}{Theorem}
\newtheorem{remark}{Remark}
\newtheorem{definition}{Definition}
\newtheorem{proposition}{Proposition}
\definecolor{sienna1}{rgb}{0.912428,0.879103,0.810742}
\definecolor{sienna2}{rgb}{0.923632,0.792035,0.603304}
\definecolor{sienna3}{rgb}{0.915381,0.591867,0.320798}
\definecolor{sienna4}{rgb}{0.794293,0.358873,0.0979057}
\definecolor{sienna5}{rgb}{0.466987,0.173327,0.0693065}
\DeclareMathOperator{\img}{Im}
\DeclareMathOperator{\spn}{span}
\newcommand{\R}{{\mathbb R}}
\newcommand{\N}{{\mathbb N}}
\newcommand{\f}{{\tilde f}}
\newcommand{\h}{{\tilde h}}
\newcommand{\x}{{\tilde x}}
\renewcommand{\L}{{\tilde L}}
\newcommand{\V}{{\tilde V}}
\newcommand{\id}{{\textnormal{id}}}
\newcommand{\range}[2]{{\left\{{#1}, \dots,{#2}\right\}}}
\newcommand{\pluseq}{\mathrel{+}=}
\crefname{diagram}{diag.}{diags.}
\Crefname{diagram}{Diagram}{Diagrams}
\tikzstyle{emptyvertex} = [inner sep=0cm,node distance=2.4cm]
\tikzstyle{vertex} = [fill,shape=circle,minimum size=0.8cm,inner sep=0cm,color=white]
\tikzstyle{edge} = [fill,line cap=round,line join=round,line width=1.15cm]
\tikzstyle{elabel} = [fill,shape=circle,node distance=1cm]
\title{
    Machines of finite depth: towards a formalization of neural networks
    }
\author{
    Pietro Vertechi
    \thanks{Correspondence at \url{pietro.vertechi@protonmail.com}}
    \and Mattia G. Bergomi
    \thanks{Correspondence at \url{mattiagbergomi@gmail.com}}
}
\date{}
\begin{document}
\maketitle

\begin{abstract}
    We provide a unifying framework where artificial neural networks and their architectures can be formally described as particular cases of a general mathematical construction---{\em machines of finite depth}. Unlike neural networks, machines have a precise definition, from which several properties follow naturally. Machines of finite depth are modular (they can be combined), efficiently computable and differentiable. The backward pass of a machine is again a machine and can be computed without overhead using the same procedure as the forward pass. We prove this statement theoretically and practically, via a unified implementation that generalizes several classical architectures---dense, convolutional, and recurrent neural networks with a rich shortcut structure---and their respective backpropagation rules.
\end{abstract}

\section{Introduction}

The notion of {\em artificial neural network} has become more and more ill-defined over time. Unlike the initial definitions~\cite{rumelhart1986learning}, which could be easily formalized as directed graphs, modern neural networks can have the most diverse structures and do not obey a precise mathematical definition.

Defining a deep neural network is a practical question, which must be addressed by all deep learning software libraries. Broadly, two solutions have been proposed. The simplest approach defines a deep neural network as a stack of pre-built layers. The user can select among a large variety of pre-existing layers and define in what order to compose them. This approach simplifies the end-user's mental load: in principle, it becomes possible for the user to configure the model via a simplified {\em domain-specific language}. It also leads to computationally efficient models, as the rigid structure of the program makes its optimization easier for the library's software developers. Unfortunately, this approach can quickly become limiting and prevent users from exploring more innovative architectures~\cite{barham2019machine}.
At the opposite end of the spectrum, a radically different approach, {\em differentiable programming}~\cite{wang2018backpropagation}, posits that every code is a model, provided that it can be differentiated by an automatic differentiation engine~\cite{frostig2018compiling,innes2019differentiable,NEURIPS2020_9332c513,paszke2017automatic,paszke2021getting,saeta2021swift,van2018automatic}. This is certainly a promising direction, which has led to a number of technological advances, ranging from differentiable ray-tracers to neural-network-based solvers for partial differential equations~\cite{innes2019differentiable,rackauckas2020universal,zubov2021neuralpde}. Unfortunately, this approach has several drawbacks, some practical and some theoretical. On the practical side, it becomes difficult to optimize the runtime of the forward and backward pass of an automatically-differentiated, complex, unstructured code. On the other hand, a mathematical formalization would allow for an efficient, {\em unified} implementation. From a more theoretical perspective, the {\em space of models} becomes somewhat ill-defined, as it is now the space of {\em all differentiable codes}---not a structured mathematical space. This concern is not exclusively theoretical. A well-behaved smooth space of neural networks would be invaluable for automated differentiable {\em architecture search}~\cite{liu2018darts}, where the optimal network structure for a given problem is found automatically. Furthermore, a well-defined notion of neural network would also foster collaboration, as it would greatly simplify {\em sharing} models as precise mathematical quantities rather than differentiable code written in a particular framework.

\paragraph{Aim.} Our ambition is to establish a unified framework for deep learning, in which deep feedforward and recurrent neural networks, with or without shortcut connections, are defined in terms of a {\em unique} layer, which we will refer to as a {\em parametric machine}.
This approach allows for extremely simplified flows for designing neural architectures, where a small set of hyperparameters determines the whole architecture.
By virtue of their precise mathematical definition, parametric machines will be language-agnostic and independent from automatic differentiation engines. Computational efficiency, in particular in terms of efficient {\em gradient} computation, is granted by the mathematical framework.

\paragraph{Contributions.}
The theoretical framework of parametric machines unifies seemingly disparate architectures, designed for structured or unstructured data, with or without recurrent or shortcut connections. We provide theorems ensuring that
\begin{enumerate*}
    \item under the assumption of finite depth, the output of a machine can be computed efficiently;
    \item complex architectures with shortcuts can be built by adding together machines of depth one, thus generalizing neural networks at any level of granularity (neuron, layer, or entire network);
    \item backpropagating from output to input space is again a machine computation and has a computational cost comparable to the forward pass.
\end{enumerate*}
In addition to the theoretical framework, we implement the input-output computations of parametric machines, as well as their derivatives, in the Julia programming language~\cite{bezansonJuliaFreshApproach2017} (both on CPU and on GPU). Each algorithm can be used both as standalone or layer of a classical neural network architecture.

\paragraph{Structure.}
\Cref{sec:machines} introduces the abstract notion of {\em machine}, as well as its theoretical properties. In \cref{sec:resolvent}, we define the {\em machine equation} and the corresponding {\em resolvent}. There, we establish the link with deep neural networks and backpropagation, seen as machines on a global normed vector space. \Cref{sec:depth} discusses under what conditions the machine equation can be solved efficiently, whereas in \cref{sec:composability} we discuss how to combine machines under suitable independence assumptions. The theoretical framework is completed in \cref{sec:optimization}, where we introduce the notion of parametric machine and discuss e

xplicitly how to differentiate its output with respect to the input and to the parameters. \Cref{sec:implementation} is devoted to practical applications. There, we discuss in detail an implementation of machines that extends classical dense, convolutional, and recurrent networks with a rich shortcut structure.

\section{Machines}
\label{sec:machines}

We start by setting the mathematical foundations for the study of {\em machines}. In order to retain two key notions that are pervasive in deep learning---linearity and differentiability---we choose to work with normed vector spaces and Fr\'{e}chet derivatives. We then proceed to build network-like architectures starting from continuously differentiable maps of normed vector spaces. We refer the reader to \cref{sec:normedvectorspaces} for relevant definitions and facts concerning differentiability in the sense of Fr\'{e}chet.

The key intuition is that a neural network can be considered as an endofunction $f:X\rightarrow X$ on a space of global functions $X$ (defined on all neurons on all layers). We will show that this viewpoint allows us to recover classical neural networks with arbitrarily complex shortcut connections. In particular, the forward pass of a neural network corresponds to computing the inverse of the mapping $\id - f$. We explore under what conditions on $f$, the mapping $\id - f$ is invertible, and provide practical strategies for computing it and its derivative.

This meshes well with the recent trend of {\em deep equilibrium models}~\cite{bai2019deep}. There, the output of the network is defined implicitly, as the solution of a fixed-point problem. Under some assumptions, such problems have a unique solution that can be found efficiently~\cite{winston2020monotone}. Furthermore, the implicit function theorem can be used to compute the derivative of the output with respect to the input and parameters~\cite{gurumurthy2021joint}. While the overarching formalism is similar, here we choose a different set of fixed-point problems, based on an {\em algebraic} condition which generalizes classical feedforward architectures and does not compromise on computational efficiency.
We explored in~\cite{vertechi2020parametric} a first version of the framework. Here, we develop a much more streamlined approach, which does not rely explicitly on category theory. Instead, we ground the framework in functional analysis. This perspective allows us to reason about automatic differentiation and devise efficient algorithms for the reverse pass.

\subsection{Resolvent}
\label{sec:resolvent}

We start by formalizing how, in the classical deep learning framework, different layers are combined to form a network.
Intuitively, function composition appears to be the natural operation to do so. A sequence of layers
\begin{equation*}
    X_0 \xrightarrow{l_1} X_1 \xrightarrow{l_2} \dots X_{d-1} \xrightarrow{l_d} X_d
\end{equation*}
is composed into a map $X_0 \rightarrow X_d$. We denote composition by juxtaposing functions:
\begin{equation*}
    l_d l_{d-1} \dots l_2 l_1\colon X_0 \rightarrow X_d.
\end{equation*}
However, this intuition breaks down in the case of shortcut connections or more complex, non-sequential architectures.

From a mathematical perspective, a natural alternative is to consider a {\em global} space $X = \bigoplus_{i=0}^d X_i$, and the global endofunction
\begin{equation}\label{eq:global_endofunction}
    f = \sum_{i=1}^d l_i \in C^1(X, X).
\end{equation}
What remains to be understood is the relationship between the function $f$ and the layer composition $l_d l_{d-1} \dots l_2 l_1$. To clarify this relationship, we assume that the output of the network is the entire space $X$, and not only the output of the last layer, $X_d$.
Let the input function be the continuously differentiable inclusion map $g\in C^1(X_0, X)$. The map $g$ embeds the input data into an augmented space, which encompasses input, hidden layers, and output. The network transforms the input map $g$ into an output map $h\in C^1(X_0, X)$. From a practical perspective, $h$ computes the activation values of all the layers and stores not only the final result, but also all the activations of the intermediate layers.

The key observation, on which our framework is based, is that $f$ (the sum of all layers, as in~\cref{eq:global_endofunction}) and $g$ (the input function) alone are sufficient to determine $h$ (the output function). Indeed, $h$ is the only map in $C^1(X_0, X)$ that respects the following property:
\begin{equation}
    \label{eq:machine}
    h = g + f h.
\end{equation}

In summary, we will use~\cref{eq:global_endofunction} to recover neural networks as particular cases of our framework. There, composition of layers is replaced by their sum. Layers are no longer required to be sequential, but they must obey a weaker condition of {\em independence}, which will be discussed in detail in~\cref{sec:composability}. Indeed, \cref{eq:machine} holds also in the presence of shortcut connections, or more complex architectures such as UNet~\cite{liHDenseUNetHybridDensely2018} (see~\cref{fig:complexshortcuts} for a worked example).
The existence of a unique solution to~\cref{eq:machine} for any choice of input function $g$ is the minimum requirement to ensure a well-defined input-output mapping for general architectures. It will be the defining property of a {\em machine}, our generalization of a feedforward deep neural network.

\begin{definition}
    \label{def:machine}
    Let $X$ be a normed vector space. Let $k \in \N \cup \{\infty\}$. An endofunction $f \in C^k(X, X)$ is a \emph{k-differentiable machine} if, for all normed vector space $X_0$ and for all map $g\in C^k(X_0, X)$, there exists a unique map $h\in C^k(X_0, X)$ such that \cref{eq:machine} holds.
    We refer to $X_0$ and $X$ as {\em input space} and {\em machine space}, respectively. We refer to \cref{eq:machine} as the {\em machine equation}.
\end{definition}

In the remainder we assume and shall use $k=1$, in other words machines are 1-differentiable, to allow for backpropagation. However, $k$-differentiable machines, with $k > 1$, could be used to perform gradient-based hyperparameter optimization, as discussed in~\cite{bengio2000gradient,lorraine2020optimizing}. The results shown here for $k=1$ can be adapted in a straightforward way to $k > 1$.

\Cref{def:machine} and \cref{eq:machine} describe the link between the input function $g$ and the output function $h$. By a simple algebraic manipulation, we can see that \cref{eq:machine} is equivalent to
\begin{equation*}
    (\id - f) h = g.
\end{equation*}
In other words, $f$ is a machine if and only the composition with $\id - f$ induces a bijection $C^1(X_0, X) \xrightarrow{\sim} C^1(X_0, X)$ for all normed vector space $X_0$. It is a general fact that this only happens whenever $\id - f$ is an isomorphism, as will be shown in the following proposition. This will allow us to prove that a given function $f$ is a machine by explicitly constructing an inverse of $\id - f$.

\begin{proposition}
    \label{prop:oneminusf}
    Let $X$ be a normed vector space. $f \in C^1(X, X)$ is a machine if and only if $\,\id-f$ is an isomorphism. Whenever that is the case, the {\em resolvent} of $f$ is the mapping
    \begin{equation}\label{eq:resolvent}
        R_f = (\id-f)^{-1}.
    \end{equation}
    Then, $h = g + fh$ if and only if $h = R_f g$. 
\end{proposition}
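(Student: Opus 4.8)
The plan is to prove the two directions separately, with the main content living in the "if" direction, where we must build the inverse of $\id - f$ as a map of normed vector spaces from the hypothesis that the machine equation is uniquely solvable for every input space $X_0$. The key idea is to feed the definition of a machine with a cleverly chosen input space and input function. Taking $X_0 = X$ and $g = \id_X$, the machine property yields a unique $h \in C^1(X, X)$ with $h = \id_X + f h$, i.e. $(\id - f) h = \id_X$; this is our candidate right inverse $R_f := h$.

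First I would verify that $R_f$ is also a left inverse. One way is to apply the uniqueness clause again: I want to show $(\id - f) R_f = \id_X$ implies $R_f (\id - f) = \id_X$. Consider $g' = \id - f$ itself as an element of $C^1(X, X)$ (with input space $X_0 = X$); the machine equation $h' = g' + f h' = (\id - f) + f h'$ has the unique solution $h'$. Now I would exhibit two solutions: $h' = \id_X$ obviously works since $\id_X = (\id - f) + f\,\id_X$; and $h' = R_f(\id - f)$ also works, because $(\id - f)\bigl(R_f(\id - f)\bigr) = \bigl((\id - f) R_f\bigr)(\id - f) = \id_X \cdot (\id - f) = \id - f$, so applying $(\id - f)^{-1}$-style reasoning — or rather, rearranging $(\id - f) h' = \id - f$ into $h' = (\id - f) + f h'$ — confirms it solves the equation. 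By uniqueness, $R_f(\id - f) = \id_X$. Together with the right-inverse identity, $\id - f$ is a bijection; and since $R_f \in C^1(X, X)$ by construction, $\id - f$ is an isomorphism in the category of normed vector spaces with $C^1$ maps.

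For the converse ("only if"): assume $\id - f$ is an isomorphism with $C^1$ inverse $R_f$. Given any normed vector space $X_0$ and any $g \in C^1(X_0, X)$, set $h := R_f g$. Then $h \in C^1(X_0, X)$ as a composition of $C^1$ maps, and $(\id - f) h = (\id - f) R_f g = g$, which rearranges to $h = g + f h$, giving existence. For uniqueness, if $h_1, h_2$ both satisfy the machine equation, then $(\id - f)(h_1 - h_2) = (\id - f) h_1 - (\id - f) h_2 = g - g = 0$, and composing with $R_f$ on the left forces $h_1 - h_2 = 0$ — here I should be slightly careful that $(\id - f)$ acts by post-composition on maps out of $X_0$, so "$(\id - f)\phi = 0$ as a map $X_0 \to X$" means $(\id - f)(\phi(x)) = 0$ for all $x$, whence $\phi(x) = R_f(0) = 0$ for all $x$, using $R_f(0) = 0$ since $R_f$ is linear. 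This finishes the equivalence, and the final sentence "$h = g + fh$ iff $h = R_f g$" is exactly what the two directions established. The main obstacle I anticipate is the left-inverse / uniqueness bookkeeping in the "if" direction: one must be disciplined about whether $\id - f$ is being treated as an endofunction of $X$ or as a post-composition operator on $C^1(X_0, X)$, and invoke the machine's uniqueness clause with the right choice of $(X_0, g)$ rather than silently assuming $\id - f$ is already invertible.
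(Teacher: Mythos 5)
Your ``if'' direction (machine implies isomorphism) is correct and close in spirit to the paper's: both arguments exploit the uniqueness clause of \cref{def:machine} by exhibiting two solutions of the same machine equation and forcing them to coincide. The paper first proves injectivity of $\id - f$ (applying uniqueness with $g = (\id-f)h_1$) and then extracts the left-inverse identity from $(\id - f)\,h\,(\id - f) = \id - f$; you instead feed $g' = \id - f$ directly into the machine equation and observe that both $\id_X$ and $R_f(\id - f)$ solve it, which yields $R_f(\id-f) = \id_X$ in one step. That variation is valid (your verification that $R_f(\id-f)$ solves $h' = (\id-f) + fh'$ reduces exactly to the right-inverse identity) and is, if anything, slightly more economical than the paper's route.

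The converse direction, however, contains a genuine error in the uniqueness step: $f$ is an arbitrary $C^1$ endofunction, not a linear operator, so neither $\id - f$ nor $R_f$ is linear. The manipulation $(\id - f)(h_1 - h_2) = (\id - f)h_1 - (\id - f)h_2$ is unjustified --- for a machine built from a nonlinear activation it is simply false --- and the same goes for the claim that $R_f(0) = 0$ ``since $R_f$ is linear.'' As written, that step fails. The repair is immediate and is what the paper implicitly does: from $h_i = g + f h_i$ one gets $(\id - f)(h_i(x_0)) = g(x_0)$ pointwise for each $x_0 \in X_0$, and applying the inverse \emph{function} $(\id - f)^{-1}$ to both sides gives $h_1(x_0) = (\id - f)^{-1}(g(x_0)) = h_2(x_0)$ for every $x_0$, with no subtraction of solutions and no appeal to linearity. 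You should replace the linear-algebra-style cancellation with this direct pointwise application of the inverse; once that is done, the proof is complete.
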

\begin{proof}
    Let us assume that $f$ is a machine. Let $g = \id$ and $h$ be such that $h = g + f h$. Then, $(\id - f) h = \id$, that is to say $\id - f$ has a right inverse $h$. Let $h_1, h_2$ be such that $(\id - f)h_1 = (\id - f)h_2$. Let $g = (\id - f)h_1 = (\id - f)h_2$. Then,
    \begin{equation*}
        h_1 = g + f h_1
        \quad\text{ and }\quad
        h_2 = g + f h_2,
    \end{equation*}
    hence $h_1 = h_2$, therefore $\id - f$ is injective. As
    \begin{equation*}
        (\id - f) h (\id - f) = \id - f
    \end{equation*}
    and $\id - f$ is injective, it follows that $h (\id - f) = \id$, so $\id - f$ is necessarily an isomorphism.
    Conversely, let us assume that $\id - f$ is an isomorphism. Then, for all normed vector space $X_0$ and for all $g, h \in C^1(X_0, X)$,
    \begin{equation*}
        h = g + f h \text{ if and only if } h = (\id-f)^{-1} g = R_f g.
    \end{equation*}
\end{proof}

Thanks to \cref{prop:oneminusf}, it follows that the derivative of a machine, as well as its dual, are also machines. This will be relevant in the following sections, to perform parameter optimization on machines.

\begin{proposition}\label{prop:dualmachine}
    Let $f \in C^1(X, X)$ be a machine Let $x_0 \in X$. Then, the derivative $Df(x_0)$---a bounded linear endofunction in $B(X, X)$---and its dual $\left(Df(x_0)\right)^* \in B(X^*, X^*)$ are machines, with resolvents
    \begin{equation*}
        R_{Df(x_0)} = DR_{f}(x_0)
        \quad\text{ and }\quad
        R_{\left(Df(x_0)\right)^*} = \left(DR_{f}(x_0)\right)^*,
    \end{equation*}
    respectively. 
\end{proposition}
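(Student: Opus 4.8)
The strategy is to invoke Proposition \ref{prop:oneminusf} twice: to prove $Df(x_0)$ is a machine it suffices to exhibit an inverse for $\id - Df(x_0)$, and the natural candidate is $DR_f(x_0)$. So the plan is first to differentiate the defining identity $R_f(\id - f) = \id = (\id - f)R_f$ at the point $x_0$ using the chain rule for Fréchet derivatives (recalled in the appendix referenced as \cref{sec:normedvectorspaces}). Differentiating $(\id-f)R_f = \id$ at $x_0$ and writing $y_0 = R_f(x_0)$ gives, by the chain rule, $\bigl(\id - Df(R_f(x_0))\bigr)\circ DR_f(x_0) = \id_X$; differentiating $R_f(\id - f) = \id$ at $x_0$ gives $DR_f\bigl((\id-f)(x_0)\bigr)\circ\bigl(\id - Df(x_0)\bigr) = \id_X$. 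The one subtlety is that these two equations evaluate $DR_f$ at two different base points, so I need the bookkeeping to line up: I would fix the base point to be $x_0$ throughout by writing everything in terms of $R_f$ evaluated appropriately, i.e. set $z_0 := (\id-f)^{-1}(x_0) = R_f(x_0)$ is not quite what is needed — rather, to get the resolvent of $Df(x_0)$ I should differentiate at the point $x_0$ and read off that $\id - Df(x_0)$ has $DR_f$ evaluated at $(\id - f)(x_0)$ as a one-sided inverse on one side and $DR_f$ at $x_0$ composed with the appropriate linearization on the other. Cleanest is: since $R_f$ is $C^1$ and is a two-sided inverse of the $C^1$ map $\id - f$, the standard inverse-function-type computation shows $D(\id-f)(x_0)$ is invertible with inverse $DR_f\bigl((\id-f)(x_0)\bigr)$; but $D(\id - f)(x_0) = \id - Df(x_0)$, so $\id - Df(x_0)$ is an isomorphism in $B(X,X)$, and by \cref{prop:oneminusf} $Df(x_0)$ is a machine with $R_{Df(x_0)} = \bigl(\id - Df(x_0)\bigr)^{-1} = DR_f\bigl((\id-f)(x_0)\bigr)$. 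The statement as written, $R_{Df(x_0)} = DR_f(x_0)$, is the special case obtained when one evaluates the linearization at the corresponding base point; I would state the identification carefully and note that under the identification $x_0 \leftrightarrow R_f(x_0)$ the formula reads as in the proposition. (If the paper intends $Df(x_0)$ to denote the derivative at a point already in the image, the indices match verbatim; I would add one clarifying sentence.)

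For the dual statement, once $Df(x_0)$ is known to be a machine with resolvent $R_{Df(x_0)} = \bigl(\id - Df(x_0)\bigr)^{-1}$, I would use that dualization $T \mapsto T^*$ is a contravariant isometric algebra anti-homomorphism on $B(X,X)$: it sends $\id_X$ to $\id_{X^*}$, reverses composition, and preserves invertibility with $(T^{-1})^* = (T^*)^{-1}$. Applying this to $T = \id - Df(x_0)$ gives $T^* = \id_{X^*} - \bigl(Df(x_0)\bigr)^*$, which is therefore an isomorphism of $X^*$; hence by \cref{prop:oneminusf} (applied with machine space $X^*$) the bounded linear map $\bigl(Df(x_0)\bigr)^*$ is a machine, and its resolvent is $\bigl(T^*\bigr)^{-1} = \bigl(T^{-1}\bigr)^* = \bigl(R_{Df(x_0)}\bigr)^* = \bigl(DR_f(x_0)\bigr)^*$, as claimed.

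The main obstacle I anticipate is not conceptual but notational: keeping the base points of the various derivatives straight so that the chain-rule computation actually produces a genuine two-sided inverse at a single point rather than two one-sided inverses at mismatched points. The cleanest route is to phrase it as "$R_f$ is a $C^1$ two-sided inverse of the $C^1$ endomap $\id - f$, hence differentiating the identities $R_f \circ (\id - f) = \id$ and $(\id-f)\circ R_f = \id$ and applying the chain rule yields that $D(\id - f)$ is invertible at every point with the expected inverse," and then simply substitute $D(\id-f)(x_0) = \id - Df(x_0)$. A secondary, purely technical point is checking that $Df(x_0)$, being a \emph{bounded linear} endofunction, is in particular $C^1$ so that \cref{def:machine} and \cref{prop:oneminusf} apply to it — this is immediate since bounded linear maps are their own derivatives — and likewise for its dual, using that the dual of a bounded operator is bounded with the same norm.
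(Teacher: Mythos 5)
Your proposal is correct and follows essentially the same route as the paper: differentiate the resolvent identity to conclude that $\id - Df(x_0)$ is an isomorphism, invoke \cref{prop:oneminusf}, and then dualize using $(T^{-1})^* = (T^*)^{-1}$. Your base-point bookkeeping is in fact more careful than the paper's, whose displayed formula $DR_f(x_0) = (\id - Df(x_0))^{-1}$ tacitly identifies $x_0$ with $(\id-f)(x_0)$ (the chain rule literally gives $DR_f(x_0) = (\id - Df(R_f(x_0)))^{-1}$); this does not affect the conclusion that $Df(x_0)$ and its dual are machines for every $x_0$.
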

\begin{proof}
    By differentiating \cref{eq:resolvent}, it follows that
    \begin{equation}\label{eq:resolvent_differential}
        DR_{f}(x_0) =  \left(\id - Df(x_0)\right)^{-1},
    \end{equation}
    hence $Df(x_0)$ is a machine with resolvent $DR_{f}(x_0)$. By taking the duals in \cref{eq:resolvent_differential}, it follows that
    \begin{equation*}
        \left(DR_{f}(x_0)\right)^* =  \left(\id - Df(x_0)^*\right)^{-1},
    \end{equation*}
    hence $\left(Df(x_0)\right)^*$ is a machine with resolvent $\left(DR_{f}(x_0)\right)^*$.
\end{proof}

\subsubsection*{Examples}

Standard sequential neural networks are machines. Let us consider a product of normed vector spaces $X = \bigoplus_{i=0}^d X_i$, and, for each $i \in \range{1}{d}$, a map $l_i\in C^1(X_{i-1}, X_i)$. This is analogous to a sequential neural network. Let $f = \sum_{i=1}^d l_i$. The inverse of $\id - f$ can be constructed explicitly via the following sequence:
\begin{equation*}
    y_0 = x_0
    \quad\text{ and }\quad
    y_i = l_i(y_{i - 1}) + x_i
    \text{ for }
    i \in \range{1}{d}.
\end{equation*}
Then, it is straightforward to verify that
\begin{align*}
    (\id - f) (y_0, y_1, \dots, y_d)
    &= (y_0, y_1 - l_1(y_0), \dots, y_d - l_d(y_{d-1})) \\
    &= (x_0, x_1, \dots, x_d).
\end{align*}
Conversely, let us assume that $x = (\id - l)\x$. Then,
\begin{align*}
    (y_0, y_1, \dots, y_d)
    &= (x_0, l_1(y_0) + x_1, \dots, l_d(y_{d-1}) + x_d) \\
    &= (\x_0, l_1(y_0) + \x_1 - l_1(\x_0), \dots, l_d(y_{d-1}) + \x_d - l_d(\x_{d-1})).
\end{align*}
By induction, for all $i \in \range{0}{d}$, $y_i = \x_i$. Hence,
\begin{equation*}
    R_f(x_0, x_1, \dots, x_d) = (y_0, y_1, \dots, y_d)
\end{equation*}
is the inverse of $\id - f$.

Classical results in linear algebra provide us with a different but related class of examples. Let us consider a bounded linear operator $f \in B(X, X)$. If $f$ is {\em nilpotent}, that is to say, there exists $n \in \N$ such that $f^n = 0$, then $f$ is a machine. The resolvent can be constructed explicitly as
\begin{equation*}
    (\id - f)^{-1} = \id + f + f^2 + \dots + f^{n-1}.
\end{equation*}

The sequential neural network and nilpotent operator examples have some overlap: whenever all layers $l_i$ are linear, $f = \sum_{i=1}^d l_i$ is a linear nilpotent operator. However, in general they are distinct: neural networks can be nonlinear and nilpotent operators can have more complex structures (corresponding to shortcut connections). The goal of the next section is to discuss a common generalization---machines of {\em finite depth}.

\subsection{Depth}
\label{sec:depth}


We noted in \cref{sec:resolvent} that nilpotent continuous linear operators and sequential neural networks are machines, whose resolvents can be constructed explicitly. The same holds true for a more general class of endofunctions, namely endofunctions of {\em finite depth}. In this section, we will give a precise definition of depth and show a procedure to compute the resolvent of endofunctions of finite depth. We follow the convention that, given a normed vector space $X$, a cofiltration is a sequence of quotients
\begin{equation*}
    X / V_i \rightarrow X / V_j
    \quad\text{ for }\quad
    i \ge j,
\end{equation*}
where each $V_i$ is a closed subspace of $X$.

\begin{definition}\label{def:depth}
    Let $X$ be a normed vector space and $f \in C^1(X, X)$. Let $d \in \N$. A sequence of closed vector subspaces
    \begin{equation*}
       X \supseteq V_0 \supseteq V_1 \supseteq \dots \supseteq V_d = 0
    \end{equation*}
    with associated projections $\pi_i \colon X \rightarrow X / V_i$ is a {\em depth cofiltration} of length $d$ for $f$ if the following conditions are verified.
    \begin{itemize}
        \item $\pi_0 f = 0$ or, equivalently, $\img f \subseteq V_0$.
        \item For all $i$ in $\range{1}{d}$, there exists $\f_i \in C^1(X / V_{i-1}, X / V_{i})$ such that
        \begin{equation*}
            \pi_i f = \f_i \pi_{i-1}.
        \end{equation*}
    \end{itemize}
    The {\em depth} of $f$ is the length of its shortest depth cofiltration, if any exists, and $\infty$ otherwise.
\end{definition}

\begin{remark}\label{rm:span_image}
Even though it is not required that $V_0 = \overline{\spn(\img f)}$, it is always possible, given a depth cofiltration $V_0, \dots, V_n$, to construct a new depth cofiltration
\begin{equation*}
    \V_i = V_i \cap \overline{\spn(\img f)}.
\end{equation*}
Since $\overline{\spn(\img f)} \subseteq V_0$, then $\V_0 = \overline{\spn(\img f)}$. This will be useful when combining depth cofiltrations to build depth cofiltrations of more complex endofunctions, as in \cref{thm:sum_of_machines}.
\end{remark}

\begin{proposition}\label{prop:depth_differential}
    Let $f \in C^1(X, X)$ be a machine. A sequence of closed vector subspaces
\begin{equation*}
    X \supseteq V_0 \supseteq V_1 \supseteq \dots \supseteq V_d = 0
\end{equation*}
is a depth cofiltration for $f$ if and only if it is a depth cofiltration for $Df(x_0)$ for all $x_0 \in X$. Whenever that is the case,
\begin{equation*}
    X^* \supseteq \left(X / V_{d-1}\right)^* \supseteq \dots \supseteq \left(X / V_{0}\right)^* \supseteq \left(X / X\right)^* = 0
\end{equation*}
is a depth cofiltration for $\left(Df(x_0)\right)^*$ for all $x_0 \in X$.
\end{proposition}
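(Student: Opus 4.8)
The plan is to treat the three assertions in turn, exploiting that after differentiating, a depth cofiltration becomes a purely linear condition; throughout, $\pi_i\colon X\to X/V_i$ denotes the projection attached to $V_\bullet$, and I freely use that each $\pi_i$ is bounded linear, so that $D(\pi_i g)(x_0)=\pi_i\,Dg(x_0)$ for any $g\in C^1$. For the forward implication, I would simply differentiate the two defining identities at an arbitrary $x_0\in X$: from $\pi_0 f=0$ one gets $\pi_0\,Df(x_0)=0$, and from $\pi_i f=\f_i\pi_{i-1}$, using the chain rule and that the derivative of the bounded linear map $\pi_{i-1}$ is $\pi_{i-1}$ itself, one gets $\pi_i\,Df(x_0)=\bigl(D\f_i(\pi_{i-1}(x_0))\bigr)\pi_{i-1}$ with $D\f_i(\pi_{i-1}(x_0))\in B(X/V_{i-1},X/V_i)\subseteq C^1(X/V_{i-1},X/V_i)$; this is exactly a depth cofiltration for $Df(x_0)$, for every $x_0$.

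For the converse, assume $V_\bullet$ is a depth cofiltration for every $Df(x_0)$. Then $\pi_0\,Df(x_0)=0$ for all $x_0$ forces $\pi_0 f$ to have vanishing differential, hence to be constant on the connected space $X$, and one checks $\pi_0 f=0$. Next, fix $i\in\range{1}{d}$: the cofiltration condition for each $Df(y)$ makes $\pi_i\,Df(y)$ vanish on $V_{i-1}=\ker\pi_{i-1}$, so for $x-x'\in V_{i-1}$ the chain rule gives $\tfrac{d}{dt}\,\pi_i f\bigl(x'+t(x-x')\bigr)=\pi_i\,Df\bigl(x'+t(x-x')\bigr)(x-x')=0$; integrating, $\pi_i f$ is constant on cosets of $V_{i-1}$, hence descends to a unique set map $\f_i\colon X/V_{i-1}\to X/V_i$ with $\pi_i f=\f_i\pi_{i-1}$. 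The one substantive point---the step I expect to be hardest---is the $C^1$-regularity of $\f_i$. For this I would use that $\pi_{i-1}$ is an open surjection, so any $k\in X/V_{i-1}$ lifts to some $\eta\in X$ with $\|\eta\|\le(1+\varepsilon)\|k\|$; letting $L_{x_0}$ be the unique bounded operator with $L_{x_0}\pi_{i-1}=\pi_i\,Df(x_0)$ (it exists because the right-hand side kills $V_{i-1}$ and $\pi_{i-1}$ is a quotient map), the identity $\f_i(\pi_{i-1}(x_0)+k)-\f_i(\pi_{i-1}(x_0))-L_{x_0}k=\pi_i\bigl(f(x_0+\eta)-f(x_0)-Df(x_0)\eta\bigr)$, whose right-hand side is $o(\|\eta\|)=o(\|k\|)$ by differentiability of $f$ at $x_0$, shows $\f_i$ is Fr\'echet differentiable at $\pi_{i-1}(x_0)$ with derivative $L_{x_0}$; continuity of $D\f_i$ then follows from continuity of $x_0\mapsto\pi_i\,Df(x_0)$, the isometric embedding $B(X/V_{i-1},X/V_i)\hookrightarrow B(X,X/V_i)$, $L\mapsto L\pi_{i-1}$, and the fact that $\pi_{i-1}$ is a quotient map.

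For the dual statement I would first record the purely linear reformulation: for a bounded operator $L$, the sequence $V_\bullet$ is a depth cofiltration for $L$ if and only if $LX\subseteq V_0$ and $LV_{i-1}\subseteq V_i$ for $i\in\range{1}{d}$ --- the nontrivial direction being that $LV_{i-1}\subseteq V_i$ makes the induced map $X/V_{i-1}\to X/V_i$ bounded, with norm at most $\|L\|$. Applying this to $L=Df(x_0)$ gives $LX\subseteq V_0$, $LV_{i-1}\subseteq V_i$ for all $i$, and in particular $LV_{d-1}=0$. Writing $V^\perp=\{\phi\in X^*:\phi|_V=0\}$ and using the canonical isometry $(X/V)^*\cong V^\perp$ --- so the proposed dual sequence is the decreasing chain of closed subspaces $X^*=V_d^\perp\supseteq V_{d-1}^\perp\supseteq\dots\supseteq V_0^\perp\supseteq V_{-1}^\perp=0$, with $V_{-1}:=X$ --- together with the standard rule $LV\subseteq W\Rightarrow L^*W^\perp\subseteq V^\perp$, the three families of inclusions above transform respectively into $L^*V_0^\perp=0$, $L^*V_i^\perp\subseteq V_{i-1}^\perp$ for all $i$, and $L^*X^*\subseteq V_{d-1}^\perp$; these are precisely the hypotheses of the linear reformulation applied, in $X^*$, to $L^*=(Df(x_0))^*$ along the chain above, so that chain is a depth cofiltration for $(Df(x_0))^*$, for every $x_0$. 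The only delicate ingredients are the standard facts on annihilators of closed subspaces and on operators induced on quotients; the rest is index bookkeeping.
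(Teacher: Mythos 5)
Your argument follows essentially the same route as the paper's: the paper's entire proof is a citation of \cref{prop:quotient}, whose content (the equivalence between $f$ lowering to a $C^1$ map of quotients and $Df(x)$ lowering to a bounded operator for every $x$) is exactly what you re-derive by hand. Your integration of $\pi_i\,Df$ along segments within a coset of $V_{i-1}$ is the same computation as in the appendix, and your direct verification that the induced map $\f_i$ is Fr\'echet differentiable with derivative $L_{x_0}$ is in fact more careful than the appendix's ``it is straightforward to verify''. Your treatment of the dual chain via the identification $(X/V)^*\cong V^\perp$ and the rule $LV\subseteq W\Rightarrow L^*W^\perp\subseteq V^\perp$ is correct and supplies an argument for a part of the statement on which the paper's proof is entirely silent.

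There is, however, one step that does not go through as written: in the converse direction you assert that $\pi_0\,Df(x_0)=0$ for all $x_0$ forces $\pi_0 f$ to be constant ``and one checks $\pi_0 f=0$''. Constancy is all you can get; the constant need not vanish. On $X=\R$ the constant function $f\equiv 1$ is a machine ($\id-f$ is $x\mapsto x-1$), its differential is identically $0$, and the chain $\R\supseteq V_0=0\supseteq V_1=0$ is a depth cofiltration for $Df(x_0)=0$ but not for $f$, since $\img f\not\subseteq V_0$. So the ``if'' direction requires either an extra normalization (some point at which $f$ lands in $V_0$, e.g.\ $f(0)=0$) or a weakening of the first axiom of \cref{def:depth} to ``$\pi_0 f$ is constant''. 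To be fair, the paper's one-line proof inherits exactly the same defect, since \cref{prop:quotient} applied with $V=X$ and $W=V_0$ likewise only yields that $f$ is constant modulo $V_0$; but since you explicitly flagged ``one checks $\pi_0 f=0$'' as a step to be verified, you should be aware that it cannot be verified without an additional hypothesis.
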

\begin{proof}
    The claim concerning the differential machine follows by \cref{prop:quotient} in \cref{sec:normedvectorspaces}.
\end{proof}

In simple cases, depth cofiltrations can be computed directly. For instance, if $f$ is linear and continuous, then
\begin{equation*}
    X \supseteq \ker f^d \supseteq \dots \supseteq \ker f \supseteq \ker f^0 = 0
\end{equation*}
is a depth cofiltration for $f$ if $f^{d+1} = 0$. Conversely, if a continuous linear operator $f$ admits a depth cofiltration of length $d$, then necessarily $f^{d+1} = 0$. Hence, a continuous linear operator has finite depth if and only if it is nilpotent.

Sequential neural networks are another example of endofunction of finite depth. Let us consider a sequential architecture
\begin{equation*}
    X_0 \xrightarrow{l_1} X_1 \xrightarrow{l_2} \dots X_{d-1} \xrightarrow{l_d} X_d
\end{equation*}
and
\begin{equation*}
    l = \sum_{i=1}^n l_i \in C^1(X, X), \text{ where } X = X_0 \oplus \dots \oplus X_d.
\end{equation*}
Naturally $V_i = X_{i+1} + \dots + X_d$ defines a depth cofiltration for $l$. A similar result holds for {\em acyclic} neural architectures with arbitrarily complex shortcuts. However, proving that directly is nontrivial: it will become much more straightforward with the tools developed in \cref{sec:composability}. For now, we will simply assume that a given endofunction $f$ has finite depth, and we will show how to compute its resolvent.

\begin{definition}\label{def:depthsequence}
    Let $f \in C^1(X, X)$ and $g \in C^1(X_0, X)$. Let $d \in \N$ and let
    \begin{equation*}
        X \supseteq V_0 \supseteq V_1 \supseteq \dots \supseteq V_d = 0
    \end{equation*}
     be a depth cofiltration. Its associated {\em depth sequence} is defined as follows:
     \begin{equation*}
         \h_0 = \pi_0 g
         \quad\text{ and }\quad
         \h_{i} = \pi_{i} g + \f_{i} \h_{i - 1} \text{ \;for } i \in \range{1}{d},
     \end{equation*}
     where for all $i \in \range{1}{d}$, $\h_i \in C^1(X_0, X / V_i)$. A sequence $\range{h_0}{h_d} \subseteq C^1(X_0, X)$ is a {\em lifted depth sequence} if $\pi_i h_i = \h_i$ for all $i \in \range{0}{d}$.
\end{definition}

In other words, a depth sequence is a sequence of functions that approximate more and more accurately a solution of the machine equation, as we will prove in the following theorem. In general, the depth sequence can be lifted in different ways, which correspond to algorithms to solve the machine equation of different computational efficiency, as shown in \cref{fig:depthsequence}.

\begin{figure}
    \centering
    \includegraphics[width=360pt]{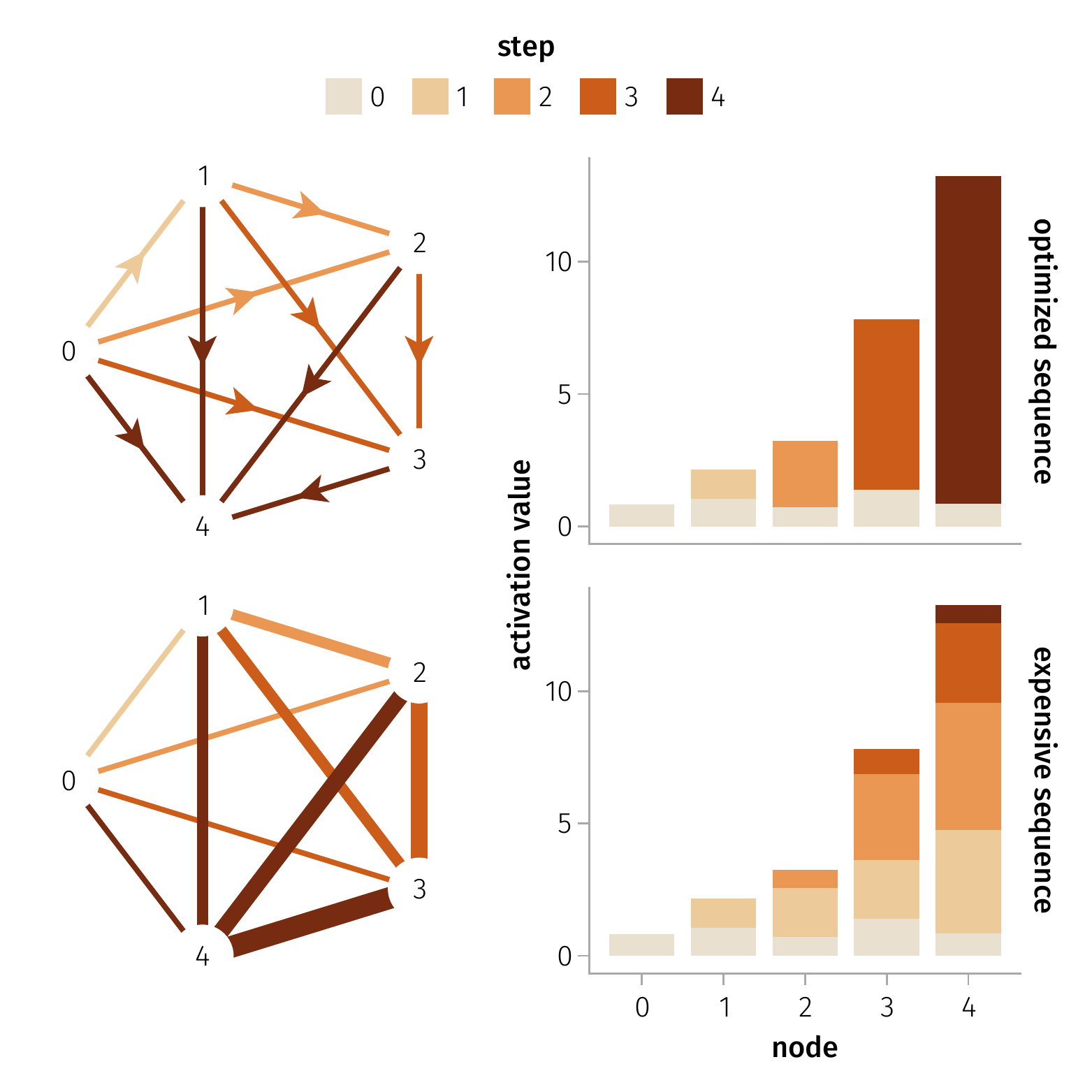}
    \caption{\textbf{Different sequences to solve a linear machine with shortcuts on $5$ nodes.}
    The left column shows the connectivity graph, whereas the right column describes the accumulated activation value on each node for a given input.
    For visual simplicity, we take positive connectivity matrix and input values, so that all updates are positive and can be represented concisely in a stacked bar plot. Two possible approaches to solve the machine equation are exemplified. The top row represents an efficient strategy: at step $i$ we update only the $i$-th node, as implied by color and orientation of the edges. In the bottom row, at the $i$-th step we evaluate $f h_i + g$. This is inefficient, as some connections need to be recomputed several times, as encoded by the line width of the edges of the bottom graph. The optimized sequence avoids this inefficiency by deferring the computation of each connection until the input value is fully determined.} \label{fig:depthsequence}
\end{figure}

\begin{proposition}\label{prop:lifted_depthsequence}
    Let $\phi \in C^1(X_0, V_0)$. The sequence
    \begin{equation*}
        h_0^\phi = g + \phi
        \quad\text{ and }\quad
         h_{i}^\phi = g + fh_{i-1}^\phi \text{ for } i \in \range{1}{d}
    \end{equation*}
    is a lifted depth sequence.
\end{proposition}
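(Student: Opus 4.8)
The plan is to establish the single identity $\pi_i h_i^\phi = \h_i$ for every $i \in \range{0}{d}$ by induction on $i$; by \cref{def:depthsequence} this is precisely what it means for $\range{h_0^\phi}{h_d^\phi}$ to be a lifted depth sequence. Before the induction I would record that each $h_i^\phi$ genuinely lies in $C^1(X_0, X)$: this holds for $h_0^\phi = g + \phi$ because $g \in C^1(X_0, X)$ and $\phi$ takes values in the closed subspace $V_0 \subseteq X$, and it propagates through the recursion $h_i^\phi = g + f h_{i-1}^\phi$ since sums and composites of $C^1$ maps are again $C^1$ (with $f \in C^1(X,X)$).

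For the base case $i = 0$, I would apply $\pi_0$ to $h_0^\phi = g + \phi$, obtaining $\pi_0 h_0^\phi = \pi_0 g + \pi_0 \phi$; since $\img \phi \subseteq V_0 = \ker \pi_0$, the second summand vanishes and $\pi_0 h_0^\phi = \pi_0 g = \h_0$. For the inductive step, assuming $\pi_{i-1} h_{i-1}^\phi = \h_{i-1}$, I would apply $\pi_i$ to $h_i^\phi = g + f h_{i-1}^\phi$, yielding
\begin{equation*}
    \pi_i h_i^\phi = \pi_i g + \pi_i f h_{i-1}^\phi.
\end{equation*}
The defining relation $\pi_i f = \f_i \pi_{i-1}$ of the depth cofiltration rewrites the second summand as $\f_i \pi_{i-1} h_{i-1}^\phi$, which equals $\f_i \h_{i-1}$ by the inductive hypothesis. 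Hence $\pi_i h_i^\phi = \pi_i g + \f_i \h_{i-1}$, which is exactly the recursion defining $\h_i$ in \cref{def:depthsequence}. This closes the induction and the proof.

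I do not anticipate any real obstacle: the argument is a direct unwinding of the two cofiltration conditions together with the two recursive definitions. The only point worth isolating is the role of the free correction term $\phi$: it enters solely through $h_0^\phi$ and is killed by $\pi_0$ because $\img \phi \subseteq V_0$; at every later stage $\pi_i h_i^\phi$ sees $h_{i-1}^\phi$ only through $\pi_{i-1} h_{i-1}^\phi$ (courtesy of $\pi_i f = \f_i \pi_{i-1}$), so the ambiguity in the lift never reaches the projected quantities. This is what underlies, informally, the fact that distinct choices of $\phi$ produce the different lifted sequences — and hence the algorithms of differing efficiency — discussed around \cref{fig:depthsequence}.
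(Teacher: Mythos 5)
Your proof is correct and follows essentially the same route as the paper's: induction on $i$, with the base case using $\img\phi \subseteq V_0 = \ker\pi_0$ and the inductive step using the cofiltration relation $\pi_i f = \f_i \pi_{i-1}$. The extra remarks on $C^1$-regularity and on the role of $\phi$ are sound but not needed beyond what the paper records.
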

\begin{proof}
    For $i = 0$, $\pi_0 h_0^\phi = \pi_0 (g + \phi) = \pi_0 g = \h_0$. If $\pi_{i-1} h_{i-1}^\phi = \h_{i-1}^\phi$, then 
    \begin{equation*}
        \pi_i h_i^\phi = \pi_i(g + fh_{i-1}^\phi) = \pi_i g + \pi_i fh_{i-1}^\phi = \pi_i g + \f_i \pi_{i-1} h_{i-1}^\phi = \pi_g + \f_i \h_{i-1} = \h_i,
    \end{equation*}
    hence the claim follows by induction.
\end{proof}

\begin{theorem}
    \label{thm:machine_of_finite_depth}
    Let us assume that $f \in C^1(X, X)$ admits a depth cofiltration of length $d$. Then, $f$ is a machine. Furthermore, let us consider $g \in C^1(X_0, X)$, and let $\h$ be its depth sequence. Then, $\h_d = g + f \h_d$.
\end{theorem}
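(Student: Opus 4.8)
The plan is to prove both assertions at once, by establishing directly that for every normed vector space $X_0$ and every $g \in C^1(X_0, X)$ there is exactly one $h \in C^1(X_0, X)$ with $h = g + fh$, and that it equals $\h_d$. (Since $V_d = 0$ we have $X / V_d = X$, so $\h_d$ is naturally an element of $C^1(X_0, X)$ and the projection $\pi_d$ is $\id_X$.) This is precisely the definition of a machine, and the displayed identity $\h_d = g + f\h_d$ is the assertion for this particular $g$. Throughout I assume $d \ge 1$; when $d = 0$ the condition $\pi_0 f = 0$ forces $f = 0$ and the statement is immediate.

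I would first settle uniqueness, which simultaneously pins down the solution. Let $h$ be any solution of $h = g + fh$. Applying $\pi_i$ to this equation, I claim by induction on $i$ that $\pi_i h = \h_i$ for all $i \in \range{0}{d}$. For $i = 0$, the condition $\pi_0 f = 0$ gives $\pi_0 h = \pi_0 g = \h_0$. For $i \ge 1$, the factorization $\pi_i f = \f_i \pi_{i-1}$ together with the inductive hypothesis gives $\pi_i h = \pi_i g + \f_i \pi_{i-1} h = \pi_i g + \f_i \h_{i-1} = \h_i$. With $i = d$ and $\pi_d = \id_X$ this forces $h = \h_d$; hence a solution is unique and, if it exists, equals $\h_d$.

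For existence I would use \cref{prop:lifted_depthsequence}. Applied with $\phi = 0$, it yields the lifted depth sequence $h_0^0 = g$, $h_i^0 = g + fh_{i-1}^0$ with $\pi_i h_i^0 = \h_i$, so that $h_d^0 = \pi_d h_d^0 = \h_d$. It remains to check that one more iteration does not change this, i.e. $f h_d^0 = f h_{d-1}^0$. Since $\img f \subseteq V_0$, the map $fg$ belongs to $C^1(X_0, V_0)$, so \cref{prop:lifted_depthsequence} applied with $\phi = fg$ produces the shifted sequence $h_i^{fg} = h_{i+1}^0$ and gives $\pi_i h_{i+1}^0 = \h_i$; in particular $\pi_{d-1} h_d^0 = \h_{d-1} = \pi_{d-1} h_{d-1}^0$. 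Using $\pi_d = \id_X$ and $\pi_d f = \f_d \pi_{d-1}$,
\begin{equation*}
    f h_d^0 = \pi_d f h_d^0 = \f_d \pi_{d-1} h_d^0 = \f_d \pi_{d-1} h_{d-1}^0 = \pi_d f h_{d-1}^0 = f h_{d-1}^0,
\end{equation*}
whence $\h_d = h_d^0 = g + f h_{d-1}^0 = g + f h_d^0 = g + f\h_d$. Combined with the uniqueness argument, this shows $f$ is a machine and proves the displayed identity.

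The step I expect to be the main obstacle is precisely this stability under one extra iteration. Because $f$ is only continuously differentiable and not linear, one cannot compare $f h_d^0$ and $f h_{d-1}^0$ by subtracting their arguments; the comparison must be routed through the cofiltration, exploiting that $f h_d^0$ depends on $h_d^0$ only through $\pi_{d-1} h_d^0$. The delicate part is the bookkeeping showing that $\pi_{d-1}$ of the $d$-th iterate has already stabilized to $\h_{d-1}$, which is exactly what applying \cref{prop:lifted_depthsequence} to the shifted start $\phi = fg$ delivers; everything else is the routine induction of the second paragraph.
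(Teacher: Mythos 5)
Your proof is correct and follows essentially the same route as the paper: both arguments hinge on \cref{prop:lifted_depthsequence} together with the shift identity $h_i^{fg} = h_{i+1}^0$ and the fact that $\pi_d = \id$ pins down $h_d^\phi = \h_d$ independently of $\phi$. The only cosmetic differences are that you prove uniqueness by a direct induction on $\pi_i h$ (where the paper observes that any solution $h$ equals $h_d^{fh}$) and that you compare the iterates at level $d-1$ through $\f_d$ rather than directly at level $d$.
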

\begin{proof}
    Let $h_i^\phi$ be as in \cref{prop:lifted_depthsequence}. Then, $h_d^0 = h_d^{fg} = h_{d+1}^0 = fg + h_d^0$, hence $h_d^0 = \h_d$ solves the machine equation. To see uniqueness, let $h$ be a solution to $h = fg + h$. Then, for all $i \in \range{0}{d}$, $h = h_i^{fh}$,  hence $h_d^0 = h_d^{fh} = h$.
\end{proof}

\subsection{Composability}\label{sec:composability}

Here, we develop the notion of machine {\em independence}, which will be crucial for {\em composability}, as it will allow us to create complex machines as a sum of simpler ones. In particular, we will show that deep neural networks can be decomposed as sum of layers and are therefore machines of finite depth.

\begin{definition}
    \label{def:independence}
    Let $X$ be a normed vector space. Let $f_1, f_2 \in C^1(X, X)$. We say that $f_1$ \emph{does not depend} on $f_2$ if, for all $x_1, x_2 \in X$, and for all $\lambda \in \R$, the following holds:
    \begin{equation}
        \label{eq:independence}
        f_1 (x_1 + \lambda f_2(x_2)) = f_1(x_1).
    \end{equation}
    Otherwise, we say that $f_1$ depends on $f_2$.
\end{definition}

\Cref{def:independence} is quite useful to compute resolvents. For instance, $f$ does not depend on itself if and only if it has depth at most $1$, in which case it is a machine, and its resolvent can be computed via $R_f = \id + f$. Furthermore, by combining machines of finite depth with appropriate independence conditions, we again obtain machines of finite depth.

If $f_1$ is linear, then $f_1$ does not depend on $f_2$ if and only if $f_1 f_2 = 0$, but in general the two notions are distinct. For instance, the following pair of functions
\begin{equation*}
    f_1(x) = x-3
    \quad\text{ and }\quad
    f_2(x) = 3
\end{equation*}
respects $f_1 f_2 = 0$, but $f_1$ depends on $f_2$ as $x - 3\lambda \neq x$ for $\lambda \neq 0$.

It follows from \cref{prop:quotient} that \cref{def:independence} has some alternative formulations. $f_1$ does not depend on $f_2$ if and only if it factors through the following quotient:
\begin{equation*}
    \begin{tikzcd}[column sep=small]
        X \arrow[swap]{rd}{\pi}  \arrow{rr}{f_1}&
        & X \\
        & X / \overline{\spn(\img f_2)} \arrow[dotted]{ru} 
        &
    \end{tikzcd}
\end{equation*}
That is equivalent to requiring that at all points the differential of $f_1$ factors via $\pi$, that is to say
\begin{equation}\label{eq:independence_differential}
    (Df_1(x_1))f_2(x_2) = 0 \text{ for all } x_1, x_2 \in X. 
\end{equation}
Given $f_1, f_2 \in C^1(X, X)$, the sets
\begin{equation*}
    \{ f \in C^1(X, X) \, | \, D(f_1(x_1))f(x_2) = 0  \text{ for all } x_1, x_2 \in X \}
\end{equation*}
and
\begin{equation*}
    \{ f \in C^1(X, X) \, | \, D(f(x_1))f_2(x_2) = 0  \text{ for all } x_1, x_2 \in X \}
\end{equation*}
are vector spaces, as they are the intersection of kernels of linear operators. In other words, if $f_1$ does not depend on $f_2$ and $\hat f_2$, then it also does not depend on $\lambda f_2 + \hat \lambda \hat f_2$, and if $f_1$ and $\hat f_1$ do not depend on $f_2$, then neither does $\lambda f_1 + \hat \lambda \hat f_1$.

\begin{theorem}
    \label{thm:sum_of_machines}
    Let $f_1, f_2$ be machines, of depth $d_1, d_2$ respectively, such that $f_1$ does not depend on $f_2$. Then $f_1+f_2$ is also a machine of depth $d \le d_1 + d_2$ and $R_{f_1+f_2} = R_{f_2} R_{f_1}$. If furthermore $f_2$ does not depend on $f_1$, then $R_{f_1+f_2} = R_{f_1} + R_{f_2} - \id$ and $d \le \max(d_1, d_2)$.
\end{theorem}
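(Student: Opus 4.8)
The plan is to build an explicit depth cofiltration for $f_1+f_2$ out of those of $f_1$ and $f_2$; by \cref{thm:machine_of_finite_depth} this simultaneously shows that $f_1+f_2$ is a machine and gives the bound $d\le d_1+d_2$. The resolvent formulas then follow from short algebraic identities, exploiting that once $f_1+f_2$ is known to be a machine, $\id-f_1-f_2$ is an isomorphism (\cref{prop:oneminusf}), so any one-sided inverse is \emph{the} inverse.

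First I would fix minimal-length depth cofiltrations $X\supseteq V_0\supseteq\dots\supseteq V_{d_1}=0$ for $f_1$ and $X\supseteq W_0\supseteq\dots\supseteq W_{d_2}=0$ for $f_2$, with projections $\pi^V_i$, $\pi^W_j$ and factoring maps $(f_1)_i$, $(f_2)_j$; using \cref{rm:span_image} I may assume $W_0=\overline{\spn(\img f_2)}$. The elementary observation driving everything is that, since $f_1$ does not depend on $f_2$, iterating \cref{eq:independence} makes $f_1$ constant along $\spn(\img f_2)$, hence along $W_0$ by continuity; equivalently $f_1$ factors through $\pi^W_0$, as a $C^1$ map by \cref{prop:quotient}. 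I would then propose the chain
\begin{equation*}
    U_i = V_i+W_0 \ \ (0\le i\le d_1), \qquad U_{d_1+j}=W_j \ \ (0\le j\le d_2),
\end{equation*}
a decreasing sequence of closed subspaces of length $d_1+d_2$ ending at $0$ (the two pieces agree at $U_{d_1}=W_0$). To verify \cref{def:depth}: $\img(f_1+f_2)\subseteq V_0+W_0=U_0$; for $1\le i\le d_1$ one has $\pi^U_i f_2=0$ (as $\img f_2\subseteq W_0\subseteq U_i$), while $\pi^U_i f_1$ is constant along $V_{i-1}$ (it equals $q(f_1)_i\pi^V_{i-1}$ for the quotient $q\colon X/V_i\to X/U_i$) and along $W_0$ (because $f_1$ is), hence along $U_{i-1}=V_{i-1}+W_0$, so it factors through $\pi^U_{i-1}$; and for $i=d_1+j$ with $1\le j\le d_2$, $\pi^{W_j}f_2=(f_2)_j\pi^{W_{j-1}}$ while $\pi^{W_j}f_1$ is constant along $W_{j-1}\subseteq W_0$, so their sum factors through $\pi^{W_{j-1}}$. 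Every factored map is $C^1$ by \cref{prop:quotient}, so $f_1+f_2$ admits a depth cofiltration of length $d_1+d_2$, hence is a machine of depth $d\le d_1+d_2$.

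For the resolvent, I would use that $R_{f_2}(x)=x+f_2(R_{f_2}(x))$, which gives $f_1R_{f_2}=f_1$ (by independence) and $f_2R_{f_2}=R_{f_2}-\id$, so that, evaluated at any $x$,
\begin{equation*}
    (\id-f_1-f_2)\bigl(R_{f_2}(x)\bigr) = R_{f_2}(x)-f_1(x)-\bigl(R_{f_2}(x)-x\bigr) = x-f_1(x),
\end{equation*}
whence $(\id-f_1-f_2)R_{f_2}R_{f_1}=(\id-f_1)R_{f_1}=\id$; since $\id-f_1-f_2$ is an isomorphism, $R_{f_1+f_2}=R_{f_2}R_{f_1}$. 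If $f_2$ also does not depend on $f_1$, the same computation with the roles swapped gives $R_{f_1+f_2}=R_{f_1}R_{f_2}$, and I would check the additive formula pointwise: with $y=R_{f_1}(x)+R_{f_2}(x)-x$, the vector $y$ differs from $R_{f_1}(x)$ by $f_2(R_{f_2}(x))\in\img f_2$ and from $R_{f_2}(x)$ by $f_1(R_{f_1}(x))\in\img f_1$, so independence gives $f_1(y)=R_{f_1}(x)-x$ and $f_2(y)=R_{f_2}(x)-x$, hence $(\id-f_1-f_2)(y)=x$ and therefore $R_{f_1+f_2}=R_{f_1}+R_{f_2}-\id$. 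For the sharper depth bound, assuming in addition $V_0=\overline{\spn(\img f_1)}$ (so that $f_2$ is constant along $V_0$), the chain $U_i=V_i+W_i$ for $0\le i\le\min(d_1,d_2)$, continued by whichever of $V_\bullet$, $W_\bullet$ has not yet reached $0$, is a depth cofiltration by the same kind of verification, now of length $\max(d_1,d_2)$.

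The one genuinely delicate point is that the subspaces $V_i+W_0$ (and $V_i+W_i$) must be \emph{closed}; this is where the normalizations from \cref{rm:span_image} are essential, and it holds automatically in finite dimensions and in the situations considered in the applications. Apart from that, the remaining work is bookkeeping: keeping the indices straight at the seam $i=d_1$, and remembering that composition with a nonlinear map does not distribute over sums, so each displayed identity must be read as an equality of functions checked at a point.
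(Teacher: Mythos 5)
Your argument follows essentially the same route as the paper's: the combined chains $V_i+W_0$ followed by $W_j$ (for the bound $d_1+d_2$) and $V_i+W_i$ (for the bound $\max(d_1,d_2)$) are exactly the cofiltrations used there, and your pointwise verification that $y=R_{f_1}(x)+R_{f_2}(x)-x$ solves the machine equation is the paper's computation of the additive formula written at a point. The one logical wrinkle is the order of deductions: you obtain the machine property of $f_1+f_2$ from the existence of a finite depth cofiltration, which is unavailable when $d_1$ or $d_2$ is infinite, whereas the theorem (and the paper's proof) covers that case as well. The fix is already contained in your own resolvent computation: post-composing your identity $(\id-f_1-f_2)R_{f_2}=\id-f_1$ with $\id-f_2$ yields $(\id-f_1-f_2)=(\id-f_1)(\id-f_2)$, a composition of isomorphisms; this is the paper's one-line argument, valid for arbitrary machines, and it should come first so that the cofiltration construction is only responsible for the depth bound (where the infinite case is trivial). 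Two smaller remarks. First, your closedness concern about $V_i+W_0$ and $V_i+W_i$ is legitimate, but it is not resolved by \cref{rm:span_image}, whose role is only to ensure $W_0=\overline{\spn(\img f_2)}$ so that independence makes $f_1$ factor through $\pi^W_0$; the sum of two closed subspaces of a normed space need not be closed in infinite dimensions, and the paper passes over this silently, so here you are more careful than the source. Second, your explicit check of the factoring conditions at each index, including the seam at $i=d_1$, is more detailed than the paper's, which simply exhibits the chains.
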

\begin{proof}
    By~\cref{prop:oneminusf,eq:independence}, $f_1+f_2$ is a machine:
    \begin{equation}
        \label{eq:compositionindependent}
        (\id - f_1)(\id - f_2) = (\id - f_1 - f_2),
    \end{equation}
    so $(\id - f_1 - f_2)$ is an isomorphism (composition of isomorphisms). \Cref{eq:compositionindependent} also determines the resolvent:
    \begin{equation*}
        R_{f_1+f_2} = (\id - f_1 - f_2)^{-1} = (\id - f_2)^{-1}(\id - f_1)^{-1} = R_{f_2}R_{f_1}.
    \end{equation*}
    Moreover, if $f_2$ does not depend on $f_1$, then
    \begin{align*}
        f_1(R_{f_1} + R_{f_2} - \id) = f_1(R_{f_1} + f_2 R_{f_2}) = f_1 R_{f_1} = R_{f_1} - \id,\\
        f_2(R_{f_1} + R_{f_2} - \id) = f_2(f_1 R_{f_1} + R_{f_2}) = f_2 R_{f_2} = R_{f_2} - \id.
    \end{align*}
    Hence,
    \begin{equation*}
        R_{f_1} + R_{f_2} - \id = \id + (f_1+f_2)(R_{f_1} + R_{f_2} - \id).
    \end{equation*}

    To prove the bounds on $d$, we can assume that $d_1$ and $d_2$ are finite, otherwise the claim is trivial. Let $X \supseteq V_0^1 \supseteq \dots \supseteq V_{d_1}^1 = 0$ and $X \supseteq V_0^2 \supseteq \dots \supseteq V_{d_2}^2 = 0$ be depth cofiltrations of minimal length for $f_1$ and $f_2$ respectively. By \cref{rm:span_image}, we can choose them such that
    \begin{equation*}
        V_0^1 = \overline{\spn(\img f_1)}
        \quad\text{ and }\quad
        V_0^2 = \overline{\spn(\img f_2)}.
    \end{equation*}
    If $f_1$ does not depend on $f_2$, then
    \begin{equation*}
        X \supseteq V_0^1 + V_0^2 \supseteq \dots \supseteq V_{d_1 - 1}^1 + V_0^2  \supseteq V_0^2 \supseteq \dots \supseteq V_{d_2}^2 = 0
    \end{equation*}
    is a depth cofiltration of length $d_1 + d_2$ for $f_1 + f_2$. If also $f_2$ does not depend on $f_1$, then we can set $d = \max(d_1, d_2)$ and define
    \begin{equation*}
        X \supseteq V_0^1 + V_0^2 \supseteq V_1^1 + V_1^2 \dots \supseteq V_{d-1}^1 + V_{d-1}^2  \supseteq V_{d}^1 + V_{d}^2 = 0,
    \end{equation*}
    where by convention $V_i^1 = 0$ if $i > d_1$ and $V_i^2 = 0$ if $i > d_2$.
\end{proof}

\begin{remark}
    The depth inequality, that is to say if $f_1$ does not depend on $f_2$, then the depth of the sum of $f_1$ and $f_2$ is bounded by the sum of the depths, is a nonlinear equivalent of an analogous result in linear algebra. Namely, given $L_1, L_2$ nilpotent operators with $L_1L_2 = 0$, the sum $L_1 + L_2$ is also nilpotent, and if $L_1 ^ {n_1} = L_2 ^ {n_2} = 0$, then $(L_1 + L_2) ^ {n_1 + n_2 - 1} = 0$.
\end{remark}

\begin{figure}
    \begin{tikzpicture}[scale=0.8, every node/.style={scale=0.8}]
        \node[emptyvertex] (v1) {};
        \node[emptyvertex,below right of=v1] (v3) {};
        \node[emptyvertex,below left of=v3] (v2) {};
        \node[emptyvertex,right of=v3] (v4) {};
        \node[emptyvertex,right of=v4] (v6) {};
        \node[emptyvertex,above right of=v4] (v5) {};
        \node[emptyvertex,below right of=v4] (v7) {};
        \node[emptyvertex,right of=v6] (v8) {};

        \node[vertex] at (v1) {};
        \node[vertex] at (v2) {};
        \node[vertex] at (v3) {};
        \node[vertex] at (v4) {};
        \node[vertex] at (v5) {};
        \node[vertex] at (v6) {};
        \node[vertex] at (v7) {};
        \node[vertex] at (v8) {};

        \node[] at (v1) {\(X_1\)};
        \node[] at (v2) {\(X_2\)};
        \node[] at (v3) {\(X_3\)};
        \node[] at (v4) {\(X_4\)};
        \node[] at (v5) {\(X_5\)};
        \node[] at (v6) {\(X_6\)};
        \node[] at (v7) {\(X_7\)};
        \node[] at (v8) {\(X_8\)};

      \begin{pgfonlayer}{background}
          \draw[edge,color=sienna1] (v1) -- (v2) -- (v3) -- (v1);
          \draw[edge,color=sienna4] (v4) -- (v5) -- (v6) -- (v7) -- (v4) -- (v6);
          \draw[edge,color=sienna2] (v1) -- (v5);
          \draw[edge,color=sienna3] (v3) -- (v4);
          \draw[edge,color=sienna5] (v6) -- (v8);
      \end{pgfonlayer}

      \node[elabel,color=sienna1,label=right:\(f_1: X_1 \times X_2\rightarrow X_3\)]  (e1) at (-7.2,0.4) {};
      \node[elabel,below of=e1,color=sienna2,label=right:\(f_2: X_1\rightarrow X_5\)]  (e2) {};
      \node[elabel,below of=e2,color=sienna3,label=right:\(f_3: X_3\rightarrow X_4\)]  (e3) {};
      \node[elabel,below of=e3,color=sienna4,label=right:\(f_4: X_4\rightarrow X_5 \times X_6 \times X_7\)]  (e4) {};
      \node[elabel,below of=e4,color=sienna5,label=right:\(f_5: X_6\rightarrow X_8\)]  (e5) {};
      \end{tikzpicture}
    \caption{\textbf{Graphical representation of a neural network with complex shortcuts as sum of machines of depth $1$.} This graphical representation corresponds to the neural network mapping $(x_1,\; x_2,\; x_3,\; x_4,\; \dots, x_8)$ to $(y_1,\; y_2,\; y_3,\; y_4,\; \dots, y_8)$ via layers $\range{f_1}{f_5}$.}\label{fig:complexshortcuts}
    \begin{minipage}{\linewidth}
        Explicitly, output values are computed as follows:
        \begin{align*}
            y_1 &= x_1\\
            y_2 &= x_2\\
            y_3 &= f_1(x_1, x_2) + x_3\\
            y_4 &= f_3(f_1(x_1, x_2) + x_3) + x_4\\
            y_5 &= f_2(x_1) + \pi_{X_5}f_4(f_3(f_1(x_1, x_2) + x_3) + x_4) + x_5\\
            y_6 &= \pi_{X_6} f_4(f_3(f_1(x_1, x_2) + x_3) + x_4) + x_6\\
            y_7 &= \pi_{X_7} f_4(f_3(f_1(x_1, x_2) + x_3) + x_4) + x_7\\
            y_8 &= f_5(\pi_{X_6} f_4(f_3(f_1(x_1, x_2) + x_3) + x_4) + x_6) + x_8
        \end{align*}
    \end{minipage}
\end{figure}
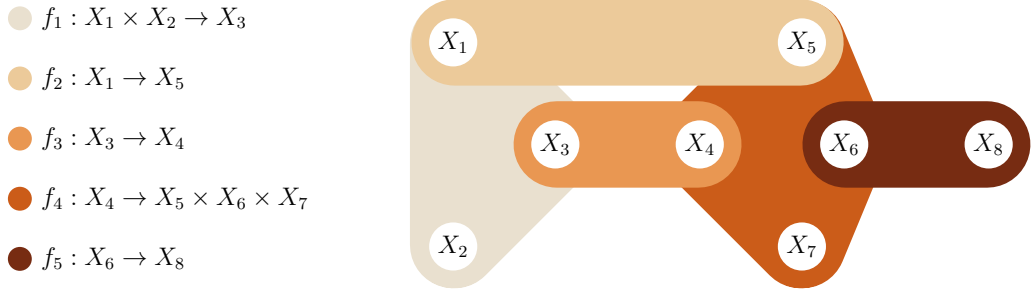

A natural notion of {\em architecture with shortcuts} follows from \cref{thm:sum_of_machines}. Let $f_1, \dots, f_n$ be such that $f_i$ does not depend on $f_j$ if $i \le j$. Then each $f_i$ has depth at most $1$, hence $f = \sum_{i=1}^n f_i $ has depth at most $n$, by \cref{thm:sum_of_machines}. Indeed, $f_{1} + \dots + f_{i-1}$ does not depend on $f_{i}$, as can be verified for each addend individually thanks to \cref{eq:independence_differential}, hence by induction $f_1 + \dots + f_i$ has depth at most $i$. Then, $f$ is a machine of depth at most $n$, whose resolvent can be computed as
\begin{equation*}
    R_{f_n}\cdots R_{f_1} g = (\id + {f_n})\cdots (\id + {f_1}) g.
\end{equation*}
In practice, this corresponds to the lifted depth sequence
\begin{equation*}
    \h_0 = g
    \quad\text{ and }\quad
    \h_{i+1} = \h_i + f_i \h_i.
\end{equation*}
This strategy can be applied to {\em acyclic} architectures with arbitrarily complex shortcuts, as illustrated in \cref{fig:complexshortcuts}. The architecture described there has depth at most $4$, as the endofunctions $f_1, f_2 + f_3, f_4, f_5$ all have depth at most $1$, and each of them does not depend on the following ones.

More generally, \cref{thm:sum_of_machines} establishes a clear link between sums of independent machines and compositions of layers in classical feedforward neural networks. The independence condition determines the order in which machines should be concatenated, even in the presence of complex shortcut connections. Furthermore, if the initial building blocks all have finite depth, then so does the sum. Thus, we can compute the machine's resolvent efficiently. As a consequence, machines of finite depth are a {\em practically computable} generalization of deep neural networks and nilpotent operators.

\subsection{Optimization}
\label{sec:optimization}

The ability to minimize an error function is crucial in machine learning applications. This section is devoted to translating classical backpropagation-based optimization to our framework. Given the input map $g\colon X_0\rightarrow X$ and a loss function $\mathcal{L}\colon X\rightarrow\mathbb{R}$, we wish to find $f\colon X\rightarrow X$ such that the composition $\mathcal{L}h$ is minimized. To constrain the space of possible endofunctions (architectures and weights), we restrict the choice of $f$ to a smoothly parameterized family of functions $f_p$, where $p$ varies within a parameter space $P$.

\paragraph{Parametric machines.}
Let $P$ be a normed vector space of {\em parameters}. A {\em parametric machine} is a $C^1$ family of machines $f(p, x) \colon P \times X \rightarrow X$ such that, given a $C^1$ family of input functions $g(p, x_0)$, the family of resolvents $h(p, x_0)$ is also jointly $C^1$ in both arguments. We call $f$ a {\em parametric machine}, with {\em parameter space} $P$. Whenever $f$ is a parametric machine, we denote by $R_f$ its {\em parametric resolvent}, that is the only function in $C^1(P \times X, X)$ such that
\begin{equation*}
    R_f(p, x_0) = x_0 + f(p, R_f(p, x_0)).
\end{equation*}

In practical applications, we are interested in computing the partial derivatives of the parametric resolvent function $R_f$ with respect to the parameters and the inputs. This can be done using the derivatives of $f$ and a resolvent computation. Therefore, the structure and cost of the backward pass (backpropagation) are comparable to those of the forward pass. We recall that the backward pass is the computation of the dual operator of the derivative of the forward pass.

\begin{theorem}\label{thm:derivatives}
Let $f(p, x)$ be a parametric machine. Let $R_f$ denote the parametric resolvent mapping
\begin{equation*}
    x = R_f(p, x_0).
\end{equation*}
Then, the following equations hold:
\begin{equation}\label{eq:forward_mode_machine}
    \frac{\partial R_f}{\partial x_0} = R_{\frac{\partial f}{\partial x}}
    \quad\text{ and }\quad
    \frac{\partial R_f}{\partial p} = \frac{\partial R_f}{\partial x_0} \frac{\partial f}{\partial p}.
\end{equation}
Analogously, by considering the dual of each operator,
\begin{equation}\label{eq:reverse_mode_machine}
    \left(\frac{\partial R_f}{\partial x_0}\right)^* = \left(R_{\frac{\partial f}{\partial x}}\right)^*
    \quad\text{ and }\quad
    \left(\frac{\partial R_f}{\partial p}\right)^* = \left(\frac{\partial f}{\partial p}\right)^* \left(\frac{\partial R_f}{\partial x_0}\right)^*.
\end{equation}
In other words,
\begin{itemize}
    \item the partial derivative of $R_f$ with respect to the inputs can be obtained via a resolvent computation, and
    \item the partial derivative of $R_f$ with respect to the parameters is the composition of the partial derivative of $R_f$ with respect to the inputs and the partial derivative of $f$ with respect to the parameters.
\end{itemize}
\end{theorem}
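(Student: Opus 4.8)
The plan is to differentiate the defining identity of the parametric resolvent,
\begin{equation*}
    R_f(p, x_0) = x_0 + f(p, R_f(p, x_0)),
\end{equation*}
first with respect to $x_0$ and then with respect to $p$, and to read off \cref{eq:forward_mode_machine} from the chain rule. Both partial derivatives exist and are continuous since $R_f \in C^1(P \times X, X)$ by the definition of a parametric machine, and the symbol $R_{\frac{\partial f}{\partial x}}$ is meaningful because, for each fixed $(p, x_0)$, the bounded linear operator $\frac{\partial f}{\partial x}(p, x_0)$ is a machine by \cref{prop:dualmachine} applied to the ordinary machine $x \mapsto f(p, x)$.

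First I would differentiate with respect to $x_0$, writing $x = R_f(p, x_0)$. The chain rule gives
\begin{equation*}
    \frac{\partial R_f}{\partial x_0} = \id + \frac{\partial f}{\partial x}(p, x)\,\frac{\partial R_f}{\partial x_0},
\end{equation*}
that is, $\left(\id - \frac{\partial f}{\partial x}\right)\frac{\partial R_f}{\partial x_0} = \id$. Since $\id - \frac{\partial f}{\partial x}$ is an isomorphism (\cref{prop:oneminusf}), this forces $\frac{\partial R_f}{\partial x_0} = \left(\id - \frac{\partial f}{\partial x}\right)^{-1} = R_{\frac{\partial f}{\partial x}}$, which is the first identity of \cref{eq:forward_mode_machine}. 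Differentiating the same identity with respect to $p$ yields
\begin{equation*}
    \frac{\partial R_f}{\partial p} = \frac{\partial f}{\partial p}(p, x) + \frac{\partial f}{\partial x}(p, x)\,\frac{\partial R_f}{\partial p},
\end{equation*}
so $\left(\id - \frac{\partial f}{\partial x}\right)\frac{\partial R_f}{\partial p} = \frac{\partial f}{\partial p}$, and therefore $\frac{\partial R_f}{\partial p} = R_{\frac{\partial f}{\partial x}}\,\frac{\partial f}{\partial p} = \frac{\partial R_f}{\partial x_0}\,\frac{\partial f}{\partial p}$, the second identity of \cref{eq:forward_mode_machine}.

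For \cref{eq:reverse_mode_machine} I would dualize the two identities just obtained, using $(AB)^* = B^*A^*$ and the fact that duality commutes with inversion: $\left(R_{\frac{\partial f}{\partial x}}\right)^* = \left(\left(\id - \frac{\partial f}{\partial x}\right)^{-1}\right)^* = \left(\id - \left(\frac{\partial f}{\partial x}\right)^*\right)^{-1} = R_{\left(\frac{\partial f}{\partial x}\right)^*}$, which is legitimate since $\left(\frac{\partial f}{\partial x}\right)^*$ is itself a machine by \cref{prop:dualmachine}. This gives $\left(\frac{\partial R_f}{\partial x_0}\right)^* = \left(R_{\frac{\partial f}{\partial x}}\right)^*$, and applying $(\cdot)^*$ to $\frac{\partial R_f}{\partial p} = \frac{\partial R_f}{\partial x_0}\,\frac{\partial f}{\partial p}$ gives $\left(\frac{\partial R_f}{\partial p}\right)^* = \left(\frac{\partial f}{\partial p}\right)^*\left(\frac{\partial R_f}{\partial x_0}\right)^*$.

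The only genuinely delicate step is the first one: applying the chain rule to an implicitly defined, jointly $C^1$ map of normed vector spaces while carefully tracking that the partial derivatives of $f$ are evaluated at $(p, R_f(p, x_0))$. Once the differentiated identities are written down correctly, the rest is just the resolvent algebra already developed in \cref{prop:oneminusf,prop:dualmachine}, so I expect no further obstacle.
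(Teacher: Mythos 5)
Your proposal is correct and follows essentially the same route as the paper: differentiate the machine equation $x = x_0 + f(p,x)$ in $x_0$ and in $p$, invert $\id - \frac{\partial f}{\partial x}$ via \cref{prop:oneminusf,prop:dualmachine}, and dualize for \cref{eq:reverse_mode_machine}. You simply spell out the chain-rule steps that the paper leaves implicit; the only nit is a slip in your opening paragraph where you write $\frac{\partial f}{\partial x}(p, x_0)$ instead of $\frac{\partial f}{\partial x}(p, R_f(p,x_0))$, which you get right in the body of the argument.
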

\begin{proof}
We can differentiate $R_f$ with respect to $p$ and $x_0$ by differentiating the machine equation $x = x_0 + f(p, x)$. Explicitly,
\begin{equation*}
    \frac{\partial R_f}{\partial x_0}
    = \left(\id - \frac{\partial f}{\partial x}\right)^{-1}
    = R_{\frac{\partial f}{\partial x}}
    \text{ and }
    \frac{\partial R_f}{\partial p}
    = \left(\id - \frac{\partial f}{\partial x}\right)^{-1} \frac{\partial f}{\partial p}
    = \frac{\partial R_f}{\partial x_0} \frac{\partial f}{\partial p}.
\end{equation*}
\Cref{eq:reverse_mode_machine} follows from \cref{eq:forward_mode_machine} by duality.
\end{proof}

The relevance of \cref{thm:derivatives} is twofold. On the one hand, it determines a {\em practical approach} to backpropagation for general parametric machines. Initially the resolvent of $\left(\frac{\partial f}{\partial x}\right)^*$ is computed on the gradient of the loss function $\mathcal{L}$. Then, the result is backpropagated to the parameters. In symbols,
\begin{equation*}
    \frac{\partial \mathcal{L}(R_f(p, x_0))}{\partial p} =
    \left(\frac{\partial f}{\partial p}\right)^* \left(\frac{\partial R_f}{\partial x_0}\right)^* D\mathcal{L}(R_f(p, x_0)).
\end{equation*}
The gradient $D\mathcal{L}(x)$, where $x = R_f(p, x_0)$, linearly maps tangent vectors of $X$ to scalars and is therefore a cotangent vector of $X$. Indeed, the dual machine $\left(\frac{\partial f}{\partial x}\right)^*$ is an endofunction of the cotangent space of $X$.
On the other hand, \cref{thm:derivatives} guarantees that in a broad class of practical cases the computational complexity of the backward pass is comparable to the computational complexity of the forward pass. We will show this practically in the following section.

\section{Implementation and performance}
\label{sec:implementation}

In this section, we shall analyze several standard and non-standard architectures in the machine framework, provide a general implementation strategy, and discuss memory usage and performance for both forward and backward pass.
We consider a broad class of examples where $f$ has both a linear component $w_p$ (parametrized by $p$) and a nonlinear component $\sigma$. Different choices of $w$ will correspond to different architecture (multi-layer perceptron, convolutional neural network, recurrent neural network) with or without shortcuts.

We split the space $X$ as a direct sum $X = Y \oplus Z$, i.e., $x = (y, z)$, where $y$ and $z$ correspond to values before and after the nonlinear activation function, respectively. Hence, we write $f_p = w_p + \sigma$, with
\begin{equation*}
    \sigma \colon Y \rightarrow Z
    \quad\text{ and }\quad
    w_p \colon Z \rightarrow Y.
\end{equation*}
The machine equation
\begin{equation*}
    x = f_p(x) + x_0
\end{equation*}
can be written as a simple system of two equations:
\begin{equation*}
    y = w_p z + y_0
    \quad\text{ and }\quad
    z = \sigma(y) + z_0.
\end{equation*}
Given cotangent vectors $u_0 \in Z^*, v_0 \in Y^*$ (which are themselves computed by backpropagating the loss on the machine output) we can run the following {\em dual machine}:
\begin{equation*}
    u = w_p^* v + u_0
    \quad\text{ and }\quad
    v = (D\sigma (y))^* u + v_0.
\end{equation*}
Then, \cref{eq:reverse_mode_machine} boils down to the following rule to backpropagate $(v_0, u_0)$ both to the input and the parameter space.
\begin{equation*}
    \left(\frac{\partial x}{\partial x_0}\right)^* (v_0, u_0) = (v, u),
    \quad\text{ and }\quad
    \left(\frac{\partial x}{\partial p}\right)^* (v_0, u_0) = \left(\frac{\partial w_p}{\partial p}\right)^* v.
\end{equation*}
In practical cases, the computation of the dual machine has not only the same structure, but also the same computational complexity of the forward pass. In particular, in the cases we will analyze, the {\em global} linear operator $w_p \in B(Y, Z)$ will be either a fully-connected or a convolutional layer, hence the dual $w_p^*$ would be a fully-connected or a transpose convolutional layer respectively, with comparable computational cost, as shown practically in~\cref{fig:benchmarks} (see \cref{tbl:benchmarks} for the exact numbers). In our applications, the nonlinearity $\sigma$ will be pointwise, hence the derivative $D\sigma(x)$ can be computed pointwise, again with comparable computational cost to the computation of $\sigma$. Naturally, for $\sigma$ to act pointwise, we require that $Y \simeq Z \simeq \R^I$ for some index set $I$.

The first obstacle in defining a machine of the type $w_p + \sigma$ is practical. How should one select a linear operator $w_p$ and a pointwise nonlinearity $\sigma$, under the constraint that $w_p + \sigma$ is a machine of finite depth? We adopt a general strategy, starting from classical existing layers and partitions on index spaces. We take $l_p$ to be a linear operator (in practice, a convolutional or fully connected layer). We consider a partition $I = \bigsqcup_{i=0}^n I_i$ of the underlying index set $I$. For $i \in \range{0}{n}$, let $\pi^Y_i, \pi^Z_i$ be the projection from $Y$ or $Z$ to the subspace corresponding to $I_0 \sqcup \dots \sqcup I_{i}$. We can define the linear component of the machine as follows:
\begin{equation*}
    w_p = \sum_{i=1}^n \left(\pi^Y_{i} - \pi^Y_{i-1}\right) l_p \pi^Z_{i-1},
\end{equation*}
that is to say, it is a modified version of $l_p$ such that outputs in index subsets depend only on inputs in previous index subsets. It is straightforward to verify that
\begin{align*}
    X = Y \oplus Z
    &\supseteq \ker \pi^Y_0 + Z \\
    &\supseteq \ker \pi^Y_0 + \ker \pi^Z_0 \\
    &\supseteq \ker \pi^Y_1 + \ker \pi^Z_0 \\
    &\supseteq \ker \pi^Y_1 + \ker \pi^Z_1 \\
    &\vdots \\
    &\supseteq \ker \pi^Y_n + \ker \pi^Y_n = 0
\end{align*}
is a depth cofiltration for $w_p + \sigma$, hence $w_p + \sigma$ is a machine of depth at most $2n + 1$.

\begin{figure}
    \centering
    \includegraphics[width=340pt]{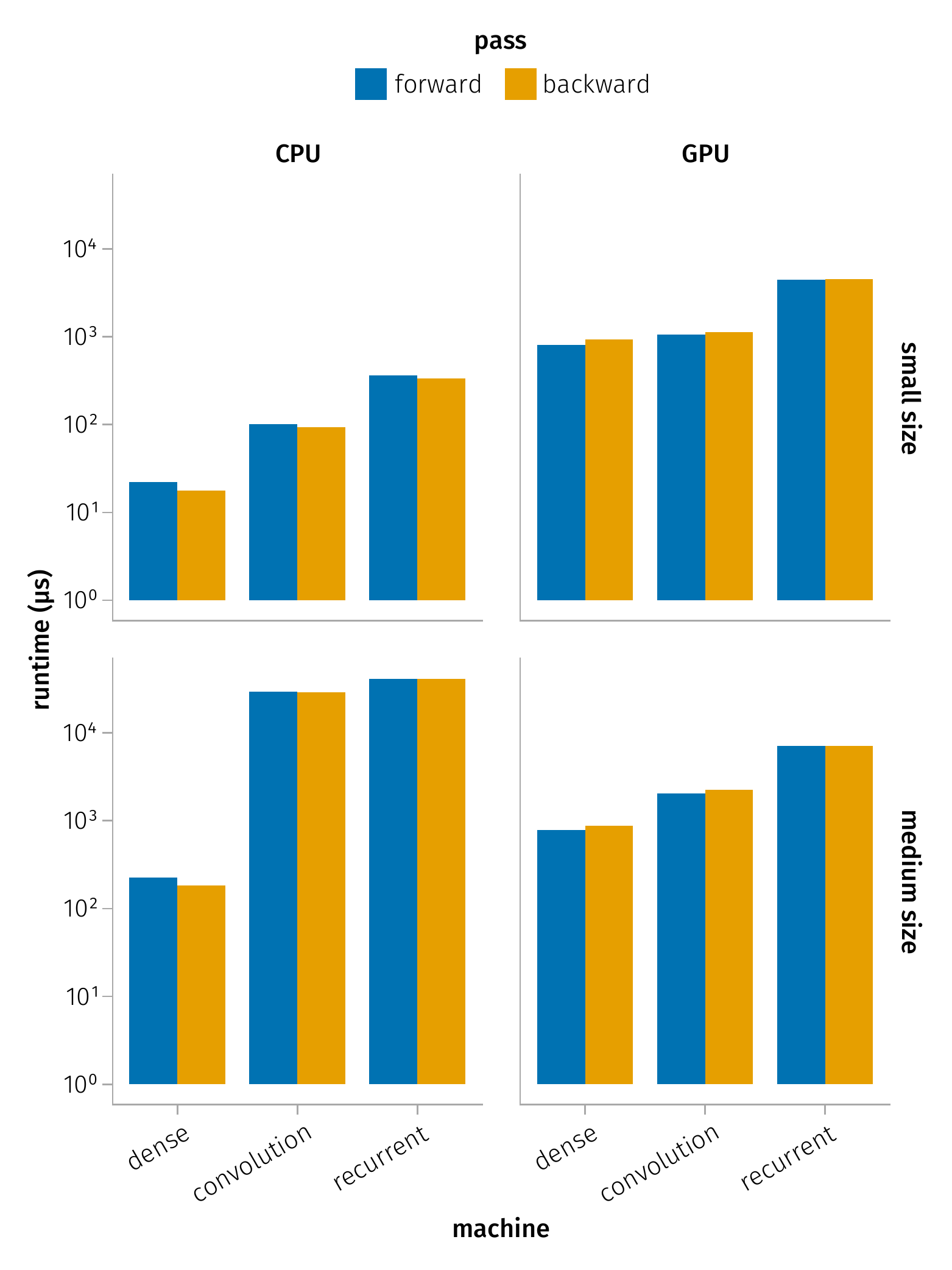}
    \caption{\textbf{Ratio of runtime of backward pass over forward pass.} The runtimes of backward and forward pass are comparable, across different models, problem sizes, and devices. The computation of the backward pass assumes that the forward pass has been computed already, and that its result is available. The backward pass denotes the backpropagation of cotangent vectors from machine space to input space. Backpropagating to parameter space requires an extra operation (see e.g. \cref{eq:resolvent_mlp_q} for the dense case).} \label{fig:benchmarks}
\end{figure}

\subsubsection*{Generalized multi-layer perceptron}

Let us consider a generalization of the multi-layer perceptron in our framework.
Let $x[c]$ (a point in machine space) be a tensor with one index, where $c \in \range{1}{n_c}$. Let $I_0, \dots, I_n$ be a partition of $\range{1}{n_c}$. We adapt the notation of the previous section: whenever possible, capital letters denote tensors corresponding to linear operators in lower case. Let $L[c_2, c_1]$ be a tensor with two indices $c_1, c_2 \in \range{1}{n_c}$, let
\begin{equation*}
    W = \sum_{i = 1}^n \left(\pi^Y_{i} - \pi^Y_{i-1}\right) L \pi^Z_{i-1},
\end{equation*}
and let $\sigma$ a pointwise nonlinearity. We consider the machine equation
\begin{align}
        z &= \sigma(y) + z_0, \label{eq:resolvent_mlp_z}\\
        y &= W z + y_0.\label{eq:resolvent_mlp_y}
\end{align}
The backward pass can be computed via the dual machine computation
\begin{align}
    v &= \sigma'(y) \odot u + v_0, \label{eq:resolvent_mlp_v}\\
    u &= W^* v + u_0, \label{eq:resolvent_mlp_u}
\end{align}
where $\sigma'$ is the derivative of $\sigma$ and $\odot$ is the Hadamard (elementwise) product, and the equations
\begin{equation}
    Q = \sum_{i = 1}^n \left(\pi^Y_{i} - \pi^Y_{i-1}\right) v z^* \pi^X_{i-1}, \label{eq:resolvent_mlp_q}
\end{equation}
where $Q$ represents the cotangent vector $(v_0, u_0)$ backpropagated to the parameters $W$. \Cref{eq:resolvent_mlp_z,eq:resolvent_mlp_y,eq:resolvent_mlp_v,eq:resolvent_mlp_u,eq:resolvent_mlp_q} can be solved efficiently following the procedure described in \cref{alg:mlp}. We describe the procedure exclusively for generalized multi-layer perceptrons, but the equivariant case (convolutional and recurrent neural networks) is entirely analogous.

\begin{algorithm}
	\caption{Computation of non-equivariant machine.}
    \label{alg:mlp}
    \bigskip
    Forward pass:
	\begin{algorithmic}[1]
        \State Initialize arrays $y, z$ of size $n_c$ and value $y = y_0, \, z = z_0$
		\For {$i = 0$ to $n$}
        \State Set $y[I_i] \pluseq W[I_i, :] z$, \cref{eq:resolvent_mlp_y}
        \State Set $z[I_i] \pluseq \sigma\left(y[I_i]\right)$, \cref{eq:resolvent_mlp_z}
		\EndFor
	\end{algorithmic}
    \bigskip
    Backward pass:
    \begin{algorithmic}[1]
        \State Initialize arrays $u, v$ of size $n_c$ and value $u = u_0, \, v = v_0$
		\For {$i = n$ to $0$}
        \State Set $u[I_i] \pluseq \left(L[:, I_i]\right)^* v$, \cref{eq:resolvent_mlp_u}
        \State Set $v[I_i] \pluseq \sigma'\left(y[I_i]\right) \odot u[I_i]$, \cref{eq:resolvent_mlp_v}
		\EndFor
        \State Initialize $Q = v z^*$, \cref{eq:resolvent_mlp_q}
        \State Set $Q[I_j, I_i] = 0$, for all $j \le i$, \cref{eq:resolvent_mlp_q}
	\end{algorithmic}
\end{algorithm}

\subsubsection*{Equivariant architectures}

We include under the broad term {\em equivariant} architectures~\cite{bergomiTopologicalGeometricalTheory2019} all machines whose underlying linear operator $w_p$ is {\em translation-equivariant}---a shift in the input corresponds to a shift in the output. This includes convolutional layers for temporal or spatial data, as well as recurrent neural networks, if we consider the input as a time series that can be shifted forward or backward in time. The similarity between one-dimentional convolutional neural networks and recurrent neural networks will become clear in the machine framework. Both architectures can be implemented with the same linear operator $l_p$ but different index space partitions.

The equivariant case is entirely analogous to the non-equivariant one. We consider the simplest scenario: one-dimensional convolutions of stride one for, e.g., time series data. We consider a discrete grid with two indices
\begin{equation*}
    t \in \range{1}{n_t}, \quad c \in \range{1}{n_c},
\end{equation*}
referring to time and channel, respectively. Thus, the input data will be a tensor of two indices, $y[t, c]$. The convolutional kernel will be a tensor of three indices, $L[\tau, c_1, c_2]$, representing time lag (kernel size), input channel, and output channel, respectively. Let $I_0, \dots, I_n$ be a partition of $\range{1}{n_t} \times \range{1}{n_c}$.

We again denote
\begin{equation*}
    W = \sum_{i = 1}^n \left(\pi^Y_{i} - \pi^Y_{i-1}\right) L \pi^Z_{i-1}
\end{equation*}
and consider the machine equation
\begin{align*}
        z &= \sigma(y) + z_0,\\
        y &= W * z + y_0.
\end{align*}
where $*$ denotes convolution. The backward pass can be computed via the dual machine computation
\begin{align*}
    v &= \sigma'(y) \odot u,\\
    u &= W *^t v + u_0,
\end{align*}
where $*^t$ denotes transposed convolution, and the equations
\begin{align*}
    \hat Q[\tau, c_1, c_2] &= \sum_{t=\tau+1}^{n_t} z[t - \tau, c_1] v[t, c_2],\\
    Q &= \sum_{i = 1}^n \left(\pi^Y_{i} - \pi^Y_{i-1}\right) \hat Q \pi^X_{i-1},\\
\end{align*}
where $Q$ represents the cotangent vector $u_0$ backpropagated to the parameters.

\paragraph{A common generalization of convolutional and recurrent neural networks.} Specific choices of the partition $I_1, \dots, I_n$ will give rise to radically different architectures. In particular, setting $I_i = \range{1}{n_t} \times J_i$ for some partition $J_0 \sqcup \dots \sqcup J_n = \range{1}{n_c}$ gives a deep convolutional network with all shortcuts. On the other hand, setting $I_{t, i} = \{t\} \times J_i$ (where $I_{t, i}$ are sorted by lexicographic order of $(t, i)$) yields a recurrent neural network with shortcuts in depth and time. The {\em dual machine} procedure is then equivalent to a generalization of {\em backpropagation through time} in the presence of shortcuts.

\paragraph{Memory usage.} Machines' forward and backward pass computations are implemented differently from classical feedforward or recurrent neural networks. Here, we store in memory a {\em global} tensor of all units at all depths, and we update it in place in a blockwise fashion. This may appear memory-intensive compared to traditional architectures. For instance, when computing the forward pass of a feedforward neural network without shortcuts, the outputs of all but the most recently computed layer can be discarded. However, those values are needed to compute gradients by backpropagation and are stored in memory by the automatic differentiation engine. Hence, machines and neural networks have comparable memory usage during training.

\section{Conclusions}

We provide solid functional foundations for the study of deep neural networks. Borrowing ideas from functional analysis, we define the abstract notion of {\em machine}, whose {\em resolvent} generalizes the computation of a feedforward neural network. It is a unified concept that encompasses several flavors of manually designed neural network architectures, both equivariant (convolutional~\cite{lecun1995convolutional} and recurrent~\cite{werbos1988generalization} neural networks) and non-equivariant (multilayer perceptron, see~\cite{rumelhart1986learning}) architectures. This approach attempts to answer a seemingly simple question: what are the {\em defining features} of deep neural networks? More practically, how can a deep neural network be specified?

On this question, current deep learning frameworks are broadly divided in two camps. On the one hand, {\em domain-specific languages} allow users to define architectures by combining a selection of pre-existing layers. On the other hand, in the {\em differentiable programming} framework, every code is a model, provided that the automatic differentiation engine can differentiate its output with respect to its parameters.
Here, we aim to strike a balance between these opposite ends of the configurability spectrum---domain-specific languages versus differentiable programming. This is done via a principled, mathematical notion of machine: an endofunction of a normed vector space respecting a simple property. A subset of machines, machines of finite depth, are a {\em computable} generalization of deep neural networks. They are inspired by {\em nilpotent} linear operators, and indeed our main theorem concerning computability generalizes a classical result of linear algebra---the identity minus a nilpotent linear operator is invertible. The output of such a machine can be computed by iterating a simple sequence, whose behavior is remindful of {\em non-normal networks}~\cite{hennequin2012non}, where the global activity can be amplified before converging to a stable state.

We use a general procedure to define several classes of machines of finite depth. As a starting point, we juxtapose linear and nonlinear continuous endofunctions of a normed vector space. This alternation between linear and nonlinear components is one of the key ingredients of the success of deep neural networks, as it allows one to obtain complex functions as a composition of simpler ones. The notion of composition of layers in neural networks is unfortunately ill-defined, especially in the presence of shortcut connections and non-sequential architectures. In the proposed {\em machine framework}, the composition is replaced by the sum, and thus sequentiality is replaced by the weaker notion of independence. We describe independence conditions to ensure that the sum of machines is again a machine, in which case we can compute its resolvent (forward pass) explicitly. This may seem counterintuitive, as the sum is a commutative operation, whereas the composition is not. However, in our framework, we can determine the order of composition of a collection of machines via their dependency structure, and thus compute the forward pass efficiently.

Once we have established how to compute the forward pass of a machine, the backward pass is entirely analogous and can be framed as a resolvent computation. This allows us to implement a backward pass computation in a time comparable to that of the forward pass, without resorting to automatic differentiation engines, provided that we can compute the derivative of the pointwise nonlinearity, which is either explicitly available or can be obtained efficiently with scalar forward-mode differentiation. In practice, we show that not only the structure but also the runtime of the backward pass are comparable to those of the forward pass and do not incur in automatic differentiation overhead~\cite{srajer2018benchmark}. We believe that encompassing both forward and backward pass within a unified computational framework can be particularly relevant in models where not only the output of the network, but also its derivatives are used in the forward pass, as for example {\em gradient-based regularization}~\cite{155328,varga2017gradient} or {\em neural partial differential equations}~\cite{zubov2021neuralpde}.

The strategy highlighted here to define machines of finite depth often generates architectures with a large number of shortcut connections. Indeed, in the machine framework, these are more natural than purely sequential architectures. Clearly, classical, sequential architectures can be recovered by forcing a subset of parameters to equal zero, thus cancelling the shortcut connections. However, this is only one of many possible ways of regularizing a machine. Several other approaches exist: setting to zero a different subset of parameters, as in the {\em lottery ticket hypothesis}~\cite{frankle2018lottery}, penalizing large differences between adjacent parameters, or, more generally, choosing a representation of the parameter space with an associated notion of smoothness, as in kernel methods~\cite{scholkopfLearningKernelsSupport2002}. We intend to investigate the relative merits of these approaches in a future work.

\section*{Author contributions}

P.V. and M.G.B devised the project. P.V. and M.G.B developed the mathematical framework. P.V. and M.G.B. developed the software to implement the framework. P.V. wrote the original draft. M.G.B. reviewed and edited.

\bibliographystyle{abbrv}
\bibliography{SmoothMachines}

\appendix

\section{Normed vector spaces and Fr\'{e}chet derivatives}
\label{sec:normedvectorspaces}

Given normed spaces $X_1, X_2$, a function $f\colon X_1 \rightarrow X_2$ is {\em differentiable} at $x_1 \in X_1$ if it can be locally approximated by a bounded linear operator $Df\left(x_1\right)$. It is {\em continuously differentiable} if it is differentiable at all points and the derivative $Df \colon X_1 \rightarrow B(X_1, X_2)$ is continuous, where $B(X_1, X_2)$ is the space of bounded linear operators with operator norm. Whenever that is the case, we will say that $f$ is $C^1$. We will also denote the space of continuously differentiable functions as $C^1(X_1, X_2)$.

We will use $^*$ to denote both the dual of a normed space, i.e. $X^* = B(X, \R)$, and the dual of each operator. In particular, $Df\left(x_1\right)^*$, the dual of the derivative, will correspond to the operator that backpropagates cotangent vectors from the output space to the input space.

The following proposition details alternative conditions which are equivalent to requiring that a given continuously differentiable map $f$ lowers to a continuously differentiable map $\f$ between quotients.
\begin{proposition}\label{prop:quotient}
    Let $X$ be a normed vector space. Let $f \in C^1(X, X)$. Let $V, W$ be closed subspaces of $X$. The following conditions are equivalent.
    \begin{enumerate}
        \item $f$ lowers to a map $\f \in C^1(X / V, X / W)$.\label{cond:lowers}
        \item For all $x \in X,$ and  $v \in V$, $f(x + v) - f(x) \in W$.\label{cond:invariant}
        \item For all $x \in X$, $(Df(x)) V \subseteq W$.\label{cond:differential_invariant}
        \item For all $x \in X$, $Df(x)$ lowers to a map $\L(x) \in B(X / V, X / W)$.\label{cond:differential_lowers}
    \end{enumerate}
\end{proposition}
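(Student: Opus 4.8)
The plan is to prove the cycle $(\ref{cond:lowers}) \Rightarrow (\ref{cond:invariant}) \Rightarrow (\ref{cond:differential_invariant}) \Rightarrow (\ref{cond:differential_lowers}) \Rightarrow (\ref{cond:lowers})$, which gives all four equivalences. The key technical fact I will need throughout is that, for a closed subspace $V \subseteq X$, the quotient $X/V$ is itself a normed vector space with the quotient norm $\|x + V\| = \inf_{v \in V}\|x+v\|$, and the projection $\pi_V \colon X \to X/V$ is a bounded linear surjection with $\|\pi_V\| \le 1$; moreover $\pi_V$ is open, so it admits bounded local sections in a suitable sense. Differentiability statements about maps into $X/W$ can then be checked by composing with $\pi_W$.

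For $(\ref{cond:lowers}) \Rightarrow (\ref{cond:invariant})$: if $\tilde f \colon X/V \to X/W$ satisfies $\pi_W f = \tilde f \pi_V$, then for $x \in X$ and $v \in V$ we have $\pi_W f(x+v) = \tilde f \pi_V(x+v) = \tilde f \pi_V(x) = \pi_W f(x)$, so $f(x+v) - f(x) \in \ker \pi_W = W$. For $(\ref{cond:invariant}) \Rightarrow (\ref{cond:differential_invariant})$: fix $x \in X$ and $v \in V$; for $t \in \R$ small, $f(x+tv) - f(x) \in W$ by hypothesis, and dividing by $t$ and letting $t \to 0$ gives $Df(x)v = \lim_{t\to 0} t^{-1}(f(x+tv)-f(x)) \in W$ since $W$ is closed. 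Hence $Df(x)V \subseteq W$. For $(\ref{cond:differential_invariant}) \Rightarrow (\ref{cond:differential_lowers})$: this is the purely linear statement that a bounded operator $L = Df(x)$ with $LV \subseteq W$ factors as $\tilde L \pi_V$ for a unique linear map $\tilde L \colon X/V \to X/W$, and boundedness of $\tilde L$ follows from the open mapping property of $\pi_V$ (equivalently, $\|\tilde L(x+V)\| = \|\pi_W L x'\|$ can be bounded using a near-minimal representative $x'$ of the coset).

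The main work is $(\ref{cond:differential_lowers}) \Rightarrow (\ref{cond:lowers})$, and this is the step I expect to be the real obstacle. Here I first want to recover condition $(\ref{cond:invariant})$: integrating the hypothesis $Df(x)V \subseteq W$ along the segment from $x$ to $x+v$ (using the mean value inequality in Banach spaces applied to $\pi_W \circ f$, whose derivative $\pi_W Df(\cdot)$ vanishes on $V$ by assumption) shows $\pi_W(f(x+v) - f(x)) = \int_0^1 \pi_W Df(x+sv)v\, ds = 0$, so $f$ descends set-theoretically to a map $\tilde f \colon X/V \to X/W$ with $\pi_W f = \tilde f \pi_V$. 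It remains to show $\tilde f$ is $C^1$. Continuity and differentiability of $\tilde f$ at a point $\bar x = \pi_V(x)$ are checked by composing with $\pi_W$: one shows $\pi_W f(x + v') - \pi_W f(x) - \pi_W Df(x) v'$ is $o(\|v'\|)$ and, crucially, depends only on $\pi_V(v')$ with the right quotient-norm estimate — this uses that $\pi_V$ is open to pick representatives $v'$ of a given coset in $X/V$ with $\|v'\|$ controlled by the coset norm, together with uniformity of the differentiability estimate for $f$ on a neighborhood (which follows from $f \in C^1$). The candidate derivative $D\tilde f(\bar x)$ is the map $\tilde L(x)$ produced in $(\ref{cond:differential_lowers})$ applied to $L = Df(x)$; one then checks $\bar x \mapsto D\tilde f(\bar x)$ is continuous by noting it equals $\tilde L \circ Df \circ (\text{section of }\pi_V)$ and invoking continuity of $Df$ and of the lowering operation $L \mapsto \tilde L$ on $B(X,X) \to B(X/V, X/W)$. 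Alternatively — and this is probably cleaner to write — once $\tilde f$ is shown to be differentiable with $D\tilde f(\bar x) = \tilde L(x)$, continuity of $D\tilde f$ is immediate because $\pi_W^{(B)} \circ D\tilde f \circ \pi_V = Df$ intertwines with the already-continuous $Df$, and the quotient maps on operator spaces are continuous.
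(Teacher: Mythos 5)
Your proposal is correct and follows essentially the same route as the paper: (1) implies (2) by definition of the lowered map, (2) and (3) are linked by calculus along the segment from $x$ to $x+v$ together with closedness of $W$, (3) and (4) are linked by the universal property of the quotient (with the quotient-norm bound), and the lowered map is then constructed with derivative $\L(x)$. The only difference is one of detail: where the paper declares the continuity of $D\f$ "straightforward to verify," you correctly identify the needed ingredients (openness of $\pi_V$ to choose near-minimal coset representatives, the $o(\cdot)$ estimate in the quotient norm, and continuity of the lowering operation on operator spaces), which strengthens rather than changes the argument.
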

\begin{proof}
    If \cref{cond:lowers} is verified, that is to say $f$ can be lowered to a quotient map $\f \in C^1(X / V, X / W)$, then necessarily, for all $v \in V$, $f(x + v)$ and $f(x)$ correspond to the same value module $W$, hence \cref{cond:invariant} is verified.
    In \cref{cond:invariant}, we can equivalently ask that $f(x + \lambda v) - f(x) \in W$ for all $\lambda \in \R,\, v \in V$.
    Let us consider the quantity
    \begin{equation*}
        f(x + \lambda v) - f(x) = \int_0^\lambda \frac{d}{ds} f(x + sv) ds = \int_0^\lambda Df(x + sv)v ds.
    \end{equation*}
    The integrand $Df(x + sv)v$ is continuous in $s$, therefore
    \begin{equation*}
        \int_0^\lambda Df(x + sv)v d \in W \text{ for all } \lambda \in \R, \, x \in X, \, v \in V
    \end{equation*}
    if and only if
    \begin{equation*}
        Df(x + sv)v \in W \text{ for all } s \in \R, \, x \in X, \, v \in V
    \end{equation*}
    or, equivalently,
    \begin{equation*}
        (Df(x)) V \subseteq W \text{ for all } x \in X,
    \end{equation*}
    hence \cref{cond:invariant,cond:differential_invariant} are equivalent.
    By the universal property of the quotient, \cref{cond:differential_lowers} is equivalent to \cref{cond:differential_invariant}, hence \cref{cond:invariant,cond:differential_invariant,cond:differential_lowers} are equivalent.
    Whenever they are all true, we can define the lowered map $\f \in C^1(X / V, X / W)$ as
    \begin{equation*}
        \f([x]) = [f(x)],
    \end{equation*}
    which is well defined thanks to \cref{cond:invariant} and has a well defined differential given by $D\f(x) = \L(x)$ as in \cref{cond:differential_lowers}. It is straightforward to verify that $D\f \colon X/V \rightarrow B(X/V, X/W)$ is continuous. Hence, \cref{cond:invariant,cond:differential_invariant,cond:differential_lowers} imply \cref{cond:lowers}.
\end{proof}

\section{Numerical experiments}
\label{sec:benchmarks}

We ran forward and backward pass of dense, convolutional, and recurrent machines, as described in \cref{sec:implementation}. The implementation and benchmarking code is implemented in the Julia programming language~\cite{bezansonJuliaFreshApproach2017}, using Flux.jl~\cite{Flux.jl-2018} for deep learning primitives, CUDA.jl~\cite{besard2018juliagpu} for GPU support, and ChainRulesCore.jl~\cite{frames_catherine_white_2022_6375037} for efficient differentiation of pointwise activation functions. The code is available at \url{https://github.com/BeaverResearch/ParametricMachinesDemos.jl}. Simulations were run on a Intel(R) Core(TM) i7-7700HQ CPU @ 2.80GHz and on a Quadro M1200 GPU. We report the minimum times found benchmarking via the BenchmarkTools package~\cite{BenchmarkTools.jl-2016}, rounded to the fifth significant digit, as well as the backward time / forward time ratio, rounded to the third decimal place. The backward pass timings indicate the time to backpropagate cotangent vectors from machine space to input space. It is assumed that the forward pass has already been computed and that its result is available.

\begin{table}
    \begin{tabular}{llllll}
        \hline
        \textbf{machine} & \textbf{size} & \textbf{device} & \textbf{forward (\si{\milli\second})} & \textbf{backward (\si{\milli\second})} & \textbf{ratio} \\ \hline
        dense & small & CPU & 22.1 & 17.6 & 0.796 \\ 
        dense & small & GPU & 806.2 & 936.6 & 1.162 \\         
        dense & medium & CPU & 224.3 & 181.9 & 0.811 \\        
        dense & medium & GPU & 782.6 & 883.1 & 1.128 \\        
        convolution & small & CPU & 100.6 & 93.4 & 0.928 \\    
        convolution & small & GPU & 1056.7 & 1131.6 & 1.071 \\ 
        convolution & medium & CPU & 29504 & 28878 & 0.979 \\  
        convolution & medium & GPU & 2054.3 & 2252.9 & 1.097 \\ 
        recurrent & small & CPU & 365.2 & 334.8 & 0.917 \\      
        recurrent & small & GPU & 4427.4 & 4542.2 & 1.026 \\    
        recurrent & medium & CPU & 41184 & 40932 & 0.994 \\     
        recurrent & medium & GPU & 7058.7 & 7118.5 & 1.008 \\   
        \hline
    \end{tabular}
    \caption{\textbf{Timings of forward and backward passes of dense, convolutional, and recurrent machines, and backward over forward ratio.} We benchmarked on a single minibatch for a small problem size (each index set $I_i$ has dimension $2$, the minibatch contains $2$ samples) and a medium problem size (each index set $I_i$ has dimension $32$, the minibatch contains $32$ samples).}\label{tbl:benchmarks}
  \end{table}

\end{document}